\def\eqref#1{equation~\ref{#1}}
\def\1{\bm{1}}
\def\eps{{\epsilon}}
\DeclareMathAlphabet{\mathsfit}{\encodingdefault}{\sfdefault}{m}{sl}
\SetMathAlphabet{\mathsfit}{bold}{\encodingdefault}{\sfdefault}{bx}{n}
\newcommand{\KL}{D_{\mathrm{KL}}}
\title{TROLL: Trust Regions improve Reinforcement Learning for Large Language Models}
\author{
\textbf{Philipp Becker}$^1$\thanks{Equal contribution. Author order was decided by a fair coin flip.}~,~
\textbf{Niklas Freymuth}$^{1*}$,~ 
\textbf{Serge Thilges}$^1$,~
\textbf{Fabian Otto}$^2$,~
\textbf{Gerhard Neumann}$^1$\\
$^1$Karlsruhe Institute of Technology,~~$^2$Microsoft Research
\vspace{-12pt}
}
\newif\ifcameraready
\renewcommand{\KL}[2]{\textrm{KL}\left( {#1} \parallel {#2} \right)}
\newcommand{\old}[1]{{#1}_{\textrm{old}}}
\newcommand{\pol}{\pi_\theta (o_t \,|\, \bm{q}, \bm{o}_{<t}) }
\newcommand{\hatpol}{\hat{\pi}_\theta (o_t \,|\, \bm{q}, \bm{o}_{<t}) }
\newcommand{\oldpol}{\old{\pi} (o_t \,|\, \bm{q}, \bm{o}_{<t})}
\newcommand{\tarpol}{\tilde{\pi}_\theta (o_t \,|\, \bm{q}, \bm{o}_{<t})}
\newcommand{\advantage}{A(o_t, \bm{q}, \bm{o}_{<t})}
\newcommand{\probs}{q}
\newcommand{\oldlogits}{\old{q}^{(\textrm{log})}}
\newcommand{\tarlogits}{\tilde{q}^{(\textrm{log})}}
\newcommand{\smolurl}[1]{{\scriptsize{\url{#1}}}}
\newcommand{\revision}[1]{{#1}}
\patchcmd{\hyper@makecurrent}{%
    \ifx\Hy@param\Hy@chapterstring
        \let\Hy@param\Hy@chapapp
    \fi
}{%
    \iftoggle{inappendix}{%
        \@checkappendixparam{chapter}%
        \@checkappendixparam{section}%
        \@checkappendixparam{subsection}%
        \@checkappendixparam{subsubsection}%
        \@checkappendixparam{paragraph}%
        \@checkappendixparam{subparagraph}%
    }{}%
}{}{\errmessage{failed to patch}}
\newcommand*{\@checkappendixparam}[1]{%
    \def\@checkappendixparamtmp{#1}%
    \ifx\Hy@param\@checkappendixparamtmp
        \let\Hy@param\Hy@appendixstring
    \fi
}
\apptocmd{\appendix}{\toggletrue{inappendix}}{}{\errmessage{failed to patch}}
\newtheorem{theorem}{Theorem}[section]
\theoremstyle{definition}
\newtheorem{definition}{Definition}[section]
\definecolor{codegray}{gray}{0.95}
\definecolor{keyword}{RGB}{0,102,204}
\definecolor{comment}{RGB}{0,128,0}
\definecolor{midnightblue06392}{RGB}{0,63,92}
\pgfplotsset{compat=1.18}
\newacronym{rl}{RL}{Reinforcement Learning}
\newacronym{troll}{\textit{TROLL}}{Trust Region Optimization for Large Language models}
\newacronym{rlvr}{RLVR}{Reinforcement Learning from Verifiable Rewards}
\newacronym{ppo}{PPO}{Proximal Policy Optimization}
\newacronym{grpo}{GRPO}{Group-Relative Policy Optimization}
\newacronym{drgrpo}{Dr.GRPO}{GRPO Done Right}
\newacronym{dpo}{DPO}{Direct Preference Optimization}
\newacronym{gspo}{GSPO}{Group Sequence Policy Optimization}
\newacronym{trpo}{TRPO}{Trust Region Policy Optimization}
\newacronym{trpl}{TRPL}{Trust Region Projection Layers}
\newacronym{llm}{LLM}{Large Language Model}
\newacronym{kl}{KL}{Kullback-Leibler}
\newacronym{mse}{MSE}{Mean Squared Error}
\newacronym{rfpp}{RF++}{REINFORCE++}
\begin{document}

\maketitle

\begin{abstract}
Reinforcement Learning (RL) with PPO-like clip objectives has become the standard choice for reward-based fine-tuning of large language models (LLMs). 
Although recent work has explored improved estimators of advantages and normalization, the clipping mechanism itself has remained untouched.
Originally introduced as a proxy for principled KL-based trust regions, clipping is a crude approximation that often causes unstable updates and suboptimal performance. 
We replace the clip objective with a novel discrete differentiable trust region projection, which provides principled token-level KL constraints.
The projection operates on a sparse subset of the model’s most important token logits to balance computational cost and projection effectiveness. 
Our approach, Trust Region Optimization for Large Language models (TROLL), serves as a direct replacement for PPO-like clipping during training and does not alter the model’s inference behavior.
Across mathematical reasoning and code generation tasks, model families, as well as advantage-estimation methods, TROLL consistently outperforms PPO-like clipping in terms of training speed, stability, and final success rates.
\end{abstract}

\section{Introduction}
\vspace{-0.2cm}
\revision{\gls{rl}} has become the standard approach for fine-tuning and aligning \glspl{llm} with preferences or verifiable rewards.
For such post-training, the algorithms of choice are predominantly \gls{ppo}-style policy gradient approaches~\citep{schulman2017proximal}.
They first estimate an advantage function and then update the policy using an importance-weighted objective, clipped to prevent the ratio between new and old policies from deviating too much. 
Recent approaches such as \glsunset{grpo}\gls{grpo}~\citep{shao2024deepseekmath}, \glsunset{drgrpo}\gls{drgrpo}~\citep{liu2025understanding}, \glsunset{gspo}\gls{gspo}~\citep{zheng2025group}\revision{, and \glsentrylong{rfpp}~\citep{hu2025reinforcepp}} improve the estimation of advantages and normalization, resulting in significant advances in \gls{rl} for \glspl{llm}.
Yet, all these approaches rely on \gls{ppo}’s clipping-based policy update mechanism. %

Clipping is originally motivated via trust region methods~\citep{schulman2015trust,schulman2017proximal}, which provide a principled way to stabilize policy updates by constraining the \glsunset{kl}\gls{kl} divergence~\citep{kullback1951information} between successive policies during training~\citep{Kakade2002, peters2010relative}. 
While theoretically sound, implementing such trust regions is often costly, in particular with modern \glspl{llm} whose vocabularies and thus output distributions can exceed $100\,000$ entries~\citep{yang2025qwen3technicalreport,qwen2025qwen25technicalreport}.
\gls{ppo} instead clips the importance ratio to sidesteps this challenge.
While empirically successful, it is a crude approximation of the underlying trust region~\citep{wang2019trust, pmlr-v115-wang20b}.
Crucially, it may lead to unstable optimization, poorly calibrated updates, as well as sensitivity to hyperparameters and implementation details, which often culminate in suboptimal performance~\citep{Engstrom2020,andrychowicz2021what,otto2021differentiable,huang202237}.

\begin{figure}[t]
    \vspace{-1cm}
    \centering
    \hfill
    \begin{minipage}[c]{0.4\textwidth}
        \centering
        \begin{adjustbox}{trim={0.2cm} 0 {0.7cm} 0, clip} 
            \input{fig/fig1/proj_vis}
        \end{adjustbox}
    \end{minipage}\hfill
    \begin{minipage}[0.6]{0.6\textwidth}
    \input{fig/fig1/14b_Code_Math}
    \end{minipage}
    \vspace{-0.5cm}
    \caption{
    \gls{troll} overview. 
    (\textbf{Left}) Example of a 3-token distribution (cat, troll, hamster). The \textcolor{red}{old policy} favors the troll, while the \textcolor{blue}{new policy} shifts toward the hamster. 
    The \textcolor{green!50!black}{projection} ensures that the updated policy stays within the trust region (circle). 
    (\textbf{Right}) \gls{troll} yields clear performance gains over PPO-like clipping (\textit{CLIP}) on mathematical reasoning and code generation tasks, as shown for Qwen3-14B trained with GRPO.}
    \vspace{-0.7cm}
    \label{fig:figure1}
\end{figure}

As a remedy, we introduce \acrfull{troll} \twemoji{troll}, a differentiable trust region projection approach that directly enforces token-level \gls{kl} constraints between discrete distributions%
\ifcameraready%
\footnote{Project page, code, and datasets available at \url{https://niklasfreymuth.github.io/troll/}}%
\fi%
.
\gls{troll} formulates a convex optimization problem that acts as a direct replacement to \gls{ppo}-like clipping objectives.
For each token, \gls{troll} projects the output distribution of the new, updated policy onto a \gls{kl}-trust region around the old policy that was used to sample the sequence.
This process ensures that the new and old policies only differ by a given bound, preventing the policy update from diverging or collapsing. 
The left of Figure~\ref{fig:figure1} shows a $3$-dimensional example where the old policy prefers the "troll" token, the new policy leans towards the "hamster" token, and the trust region constrains the update to keep the new policy close to the old one.
The direction of the projection can be computed in closed form, while its step size is the solution to a one-dimensional convex Lagrangian dual problem. 
The projection leaves the new distribution unchanged if it already falls within the trust region, and can be efficiently solved in parallel in practice.
\gls{troll} enables differentiation through the solution of the projection problem using the OptNet framework~\citep{amos2017optnet}, which introduces only negligible computational overhead. 
This process maintains gradient information even for updates that are constrained by the trust region, in contrast to \gls{ppo}-like clipping, which cuts gradients for tokens whose ratios exceed the clipping threshold.
Further, the trust region is only effective during training and provides zero additional overhead during model inference.
To incentivize the model to stay within the trust region for successive update steps, we additionally add a simple regression term between projected and unprojected tokens. 

Applying \gls{troll} directly to \glspl{llm} is computationally infeasible, since the large token vocabulary causes prohibitively expensive projections and memory overhead.
However, natural language continuations and similarly \gls{llm} predictions are generally characterized by very few high-probability tokens~\citep{zipf1949human, piantadosi2014zipf, kunstner2024heavy, duan2024shifting, ren2024learning}. 
This property enables a sparsification scheme that discards unlikely tokens, retaining only the most relevant ones.
On average, as few as $5{-}10$ tokens generally preserve more than $99.999\%$ of the distribution's probability mass.
We thus modify \gls{troll} to handle sparse distributions, allowing it to scale to modern \glspl{llm} and act as a drop-in replacement for \gls{ppo}-style clipping.

We evaluate \gls{troll} on \gls{rlvr} problems, focusing on mathematical reasoning. 
Using \gls{troll} for GRPO~\citep{shao2024deepseekmath} with models from the Qwen3~\citep{yang2025qwen3technicalreport} and Qwen2.5~\citep{qwen2025qwen25technicalreport} families on DAPO-Math~\citep{yu2025dapoopensourcellmreinforcement} yields \revision{substantial improvements in terms of training stability and success rate for evaluations based on the number of updates as well as wall clock time.
Concretely, \gls{troll} improves roughly $3{-}10$ percentage points, or $5{-}15\%$ relative over clipping for math-based reasoning.}
To assess robustness across algorithmic variants, we experiment with PPO~\citep{schulman2017proximal}, Dr.GRPO~\citep{liu2025understanding}, GSPO~\citep{zheng2025group}, \revision{and REINFORCE++~\citep{hu2025reinforcepp}}. 
Across methods, \gls{troll} consistently enables faster learning and improves success rates, indicating that its benefits are independent of the underlying advantage estimation method. 
We demonstrate \revision{similar} improvements across additional math datasets, namely GSM8K~\citep{cobbe2021trainingverifierssolvemath} and Eurus-2-RL-Math~\citep{cui2025processreinforcementimplicitrewards}, as well as models from the LLaMA 3~\citep{grattafiori2024llama3herdmodels}, SmolLM3~\citep{bakouch2025smollm3}, and Apertus~\citep{hernández2025apertus} families.
\revision{Here, \gls{troll} enables stable training for some models where while clipping fails entirely.}
\revision{Finally, we consider code generation using the Eurus-2-RL-Code~\citep{cui2025processreinforcementimplicitrewards} dataset, showing improvements of $7{-}18$ percentage points, or $18{-}30\%$ relative over clipping across Qwen3 model sizes.}

To summarize, we
\textbf{i)} derive \gls{troll}, a fully differentiable, principled trust region projection for discrete distributions that enforces per-token \gls{kl} constraints, 
\textbf{ii)} introduce a sparsification scheme that lets the projection scale to \gls{llm}-scale vocabularies and implement it as a drop-in replacement for \gls{ppo}-style heuristic clipping across \gls{rl} algorithms,
\textbf{iii)} demonstrate through experiments spanning different advantage-estimation methods, models and datasets that \gls{troll} consistently improves success rates and training stability compared to clipping.

\section{Related Work}
\vspace{-0.5cm}
\textbf{Trust Regions in Reinforcement Learning.}
Information-theoretic trust regions based on the KL divergence~\citep{kullback1951information} are known to stabilize \revision{\gls{rl}} 
in classical~\citep{Kakade2001, Kakade2002, peters2010relative, abdolmaleki2015model, akrour2018model} as well as modern deep learning settings~\citep{schulman2015trust, schulman2017proximal}.
\revision{
\gls{trpo}~\citep{schulman2015trust} limits the KL-divergence by solving a constrained optimization problem. \gls{ppo}~\citep{schulman2017proximal} simplifies this approach to a first-order method, using a clipped surrogate objective to make policy optimization in \gls{rl} scalable~\citep{akkaya2019solving, berner2019dota, Baker2020Emergent}. 
}
However, \gls{ppo}'s trust region is less principled and very sensitive to practical implementation details~\citep{Engstrom2020, andrychowicz2021what, huang202237}.
\revision{Several methods dynamically adapt \gls{ppo}'s clipping bounds~\citep{wang2019trust, xi2025bapo}, but still require clipping.
Alternative methods ~\citep{abdolmaleki2018maximum,song2020vmpo} rely on regularizing the expected KL between subsequent policies.
In contrast, \gls{troll} enforces exact, token-wise trust regions through a differentiable KL projection, combined with adaptive strategies that control the trust region size.}

\revision{\textbf{Trust Region Projections.}}
A different line of work %
\citep{otto2021differentiable, Akrour2019} enforces trust regions using projection-based methods.
These approaches first compute the policy as usual, and then project it back into a feasible set defined by a trust region constraint.
\revision{In particular, \citet{otto2021differentiable} compute exact trust region projections for each state when using Gaussian policies, leading to improvements in high-dimensional action spaces~\citep{celik2024acquiring, li2024open, hoang2025geometry, otto2024efficient}.
Similar to us, they rely on Lagrangian optimization~\citep{boyd2004convex} and implicit differentiation~\citep{amos2017optnet}, but focus on Gaussian distributions and continuous control tasks~\citep{brockman2016openai}.}
\gls{troll} instead proposes differentiable projections for categorical distributions and provides an efficient implementation via a sparsification scheme.
This combination lets \gls{troll} scale to modern-day LLMs while preserving the stability of classical trust region methods. 

\textbf{Reinforcement Learning for LLMs.}
Recently, \gls{rl} has become a key tool in the post-training stage of \glspl{llm}.
Popular frameworks include \gls{rl} from human feedback (RLHF) \citep{christiano2017deep,ziegler2019fine, stiennon2020learning,ouyang2022training} for \gls{llm} alignment and \gls{rlvr} \citep{luong2024reft,lambert2024tulu} for reasoning tasks such as mathematical problem solving or code generation.
\revision{In RLHF, preference-based methods often optimize a reward while remaining close to a reference policy via an added expected KL penalty.
This reference policy is typically the supervised fine-tuning model~\citep{stiennon2020learning, ouyang2022training}. 
For example, \gls{dpo}~\citep{rafailov2023direct} optimizes this objective in closed form on preference data, thereby avoiding policy rollouts. 
\gls{troll} does not constrain the updates to a fixed reference, instead enforcing proximity to the previous step's policy to stabilize on-policy optimization.
While this work focuses on RLVR for \glspl{llm}, \gls{troll} can also be applied to other \gls{llm} post-training settings, and more generally, discrete \gls{rl} tasks.}

\revision{
Several recent methods avoid \gls{ppo}'s importance sampling and thus clipping entirely.
These methods combine group rewards with a standard policy gradient objective~\citep{chu2025gpg}, utilize an additional value network~\citep{richemond2024offline}, use Q-learning~\citep{clavier2025shiq}, or employ a contrastive loss to enable off-policy learning~\citep{flet2024contrastive, cohere2025command}.
However, while these approaches are promising, current research on RLVR still often relies on advantage estimation. 
While not the focus of this work, we briefly show that one such non-clipping-based approach also benefits from \gls{troll}, since its trust regions directly act on the \gls{llm}'s policy. %
}

\textbf{Advantage Estimation for LLM Post-Training.}
\gls{ppo}~\citep{schulman2017proximal} requires an expensive explicit value model.
Especially for \gls{rlvr}, where evaluating multiple rollouts per input is comparatively cheap, sample-based advantage estimation methods like \glsreset{grpo}\gls{grpo}~\citep{shao2024deepseekmath} have become popular alternatives.
Variants include %
\revision{\glsreset{drgrpo}\gls{drgrpo}~\citep{liu2025understanding}, which addresses \gls{grpo}'s optimization biases that favor longer responses, and \glsentrylong{rfpp}~\citep{hu2025reinforcepp}, which uses Global Advantage Normalization~\citep{andrychowicz2021what} to improve training stability.}
\glsreset{gspo}\gls{gspo}~\citep{zheng2025group} extends importance ratios from token to sequence level to improve update stability.
However, all these methods still depend on \gls{ppo}-style clipping to stabilize policy updates.
We introduce \gls{troll} as a more principled drop-in replacement, applicable regardless of how advantages are computed and compatible with all the approaches above.

\section{Trust Region Optimization for Large Language Models}
\glsresetall
\gls{rl} for~\glspl{llm} \revision{finetunes the LLM's parameters, $\theta$, using} policy ratio objectives~\citep{schulman2015trust} of the form
\begin{align}
    \label{eq:pg}
   \mathcal{J}_\textrm{ratio}(\theta) = \mathbb{E}_{\bm{o} \sim \old{\pi}(\bm{o} | \bm{q}) \mathcal{D}(\bm{q})} \left[ \frac{1}{|\bm{o}|} \sum_{t=1}^{|\bm{o}|} \left( \dfrac{\tarpol}{\oldpol} \advantage \right) \right],
\end{align}
where $\tarpol$ is the probability of the sampled token under the current \gls{llm} policy.
\revision{Here, $\oldpol$ is the token's probability under the \gls{llm} policy that was used for data collection in the previous iteration.}
The context sequence consists of the prompt $\bm{q}$ and prior response tokens $\bm{o}_{<t}$.
The advantage estimate $A_t = \advantage$ measures if a token is better or worse than the average behavior.
Thus, maximizing $\mathcal{J}_\textrm{ratio}(\theta)$ increases the probability of good responses while decreasing the probability of bad ones. 
In practice, we estimate $A_t$ using an explicit value model as in \glsunset{ppo}\gls{ppo}~\citep{schulman2017proximal} or purely sample-based as in \glsunset{grpo}\gls{grpo} and its variants~\citep{shao2024deepseekmath, liu2025understanding, zheng2025group}.
For such policy ratio objectives, stable and effective optimization requires keeping $\tarpol$ and $\oldpol$ close, so that the importance ratio $r_{\theta, t}=\nicefrac{\tarpol}{\oldpol}$ remains close to one~\citep{schulman2015trust, schulman2017proximal}. %
\gls{ppo} attempts to maintain this proximity by clipping the ratio,%
\begin{align}
\label{eq:ppo_clip}
\mathcal{J}_\textrm{ppo}(\theta)  = \mathbb{E}_{o_t \sim \old{\pi}(\bm{o} | \bm{q}) \mathcal{D}(\bm{q})} \left[ \frac{1}{|\bm{o}|} \sum_{t=1}^{|\bm{o}|} \min\left(r_t A_t;\mathrm{clip}\left(r_t, 1-\epsilon_\text{ppo}, 1+\epsilon_\text{ppo}\right) A_t \right) \right].
\end{align}
However, this clipping is a crude \revision{approximation of the underlying trust region principle}.
While it prevents large updates, this approach is purely heuristic and suppresses gradients when the ratio falls outside the clipping range, resulting in unstable and inefficient learning.
In contrast, token-wise \glsunset{kl}\gls{kl}-based constraints offer a principled approach to limit the change between successive policies.
Our method, \gls{troll}, implements these constraints using differentiable trust region projections~\citep{otto2021differentiable} as a drop-in replacement for the \gls{ppo}-like clipping.

\subsection{Discrete Differentiable Trust Region Projections}

\begin{figure}[t]
    \centering
    \resizebox{\textwidth}{!}{ \input{fig/fig2/overview}}
    \caption{
    \revision{\gls{troll} replaces \gls{ppo}-like clipping with a differentiable trust region projection approach. 
    Given the current output distribution of an LLM $\tarpol$, and the distribution that was used to collect the sequence $\oldpol$ for the replay buffer, 
    \gls{troll} enforces a per-token trust region by solving \autoref{eq:proj} for each token.
    The resulting distribution is then used in the RL objective (\autoref{eq:troll_obj}) to update the LLM parameters.
    To scale this approach to the vocabulary size of modern LLMs, \gls{troll} uses a sparsification approach, which allows working on a small subset of logits while retaining most of the distribution's mass. 
    }
    }
    \label{fig:figure2}
    \vspace{-0.2cm}
\end{figure}
\revision{\textbf{Trust Region Projection.}} Our projection formally solves the convex optimization problem 
\revision{\begin{align}
  \pol = & \underset{{\hatpol}}{\arg \min}  \KL{\hatpol}{\tarpol} \label{eq:proj} \\ &\text{s.t.} ~ \KL{\hatpol}{\oldpol} \leq \epsilon \nonumber
\end{align}}%
for every output token $o_t$%
\footnote{$\pol$ must also remain a valid distribution, i.e., $\sum_{o_t} \pol =1$ and $\pol\geq0$ for all $o_t$. 
We omit these constraints for brevity and elaborate in Appendix~\ref{app_sec:derivations}.}.
Intuitively, the projection finds the policy distribution closest to the current LLM policy $\tarpol$ while remaining within an $\epsilon$-bound of the policy \revision{used to collect the data for the current iteration $\oldpol$}.
The solution to this optimization problem is derived in \autoref{app_sec:troll_primal} and given as
\begin{align}
\pol \propto  \exp \left(\dfrac{\eta^* \log \oldpol + \revision{\log{}} \tarpol }{\eta^* + 1} \right) \label{eq:primal}\text{,}%
\end{align}
which is a geometric interpolation between the logits of $\tarpol$ and $\oldpol$. 
Here, $\eta^*$ acts as a step size controlling how far the projection moves the new policy to the old one. 
For each token, we can compute the optimal $\eta^*$ which enforces the trust region constraint by solving the convex dual of \autoref{eq:proj}.
This dual is a scalar optimization problem, which we derive and state in \autoref{app_sec:troll_dual}, and can be solved with sufficient accuracy using a few iterations of ternary, or more generally, $n$-ary, bracketing.
Furthermore, projecting is only necessary if the trust region bound is violated, which is only the case for very few, but highly relevant tokens.
Thus, we can avoid it for the vast majority of tokens by filtering them beforehand.

\revision{\textbf{Policy Updates with Trust Region Projections.}}
After projection, the policy $\pol$ satisfies the trust region constraint and can be \revision{used to optimize the LLM parameters $\theta$, via an update similar to} \autoref{eq:pg}.
However, the \gls{llm} output $\tarpol$ may still deviate arbitrarily from the old policy, complicating inference and successive updates. %
\revision{To avoid this,} we follow~\citet{otto2021differentiable} and address this by regressing the \gls{llm} output $\tarpol$ toward its projection $\pol$, resulting in an objective 
\revision{
\begin{align}
&\mathcal{J}_\textrm{Troll}(\theta) =\label{eq:troll_obj} \\ 
&\mathbb{E}_{o_t \sim \old{\pi}(\bm{o} | \bm{q}) \mathcal{D}(\bm{q})} \!\left[ \frac{1}{|\bm{o}|} \sum_{t=1}^{|\bm{o}|} \left( \dfrac{\pol}{\oldpol} A_t\right) - \alpha \KL{\tarpol}{\bigl\lfloor \pol \bigr\rfloor} \right] \nonumber,
\end{align}
}%
where $\lfloor~~\rfloor$ denotes \revision{the stop gradient operator} and $\alpha$ is a user-specified regression weight. 
Crucially, the projected policy $\pol$ is used to compute the ratios and as a regression target for the \gls{llm} output $\tarpol$. 
For the regression, we \revision{stop the gradients through $\pol$} so that the \gls{llm} policy $\tarpol$ is pulled towards the output of the projection $\pol$, not the other way around.
The regression term only affects projected tokens and still allows policy updates up to the \gls{kl} bound, making the approach robust to the choice of $\alpha$.
We thus set to $\alpha=1$ in all experiments for simplicity.
Notably, our objective in \autoref{eq:troll_obj} makes no assumption on the advantages $A_t$. 
Thus, \gls{troll} can be directly applied to a variety of existing advantage estimation methods, including \gls{ppo}, \gls{grpo}, \glsunset{drgrpo}\gls{drgrpo}, and \glsunset{gspo}\gls{gspo}.
\revision{\autoref{algo:troll_pseudo} provides pseudocode.}

\begin{algorithm}[t]
\caption{\revision{Optimizing LLMs with TROLL}}
\small{
\begin{algorithmic}[1]
    \State LLM policy $\tarpol$, Training data $\mathcal{D}$, KL Bound $\epsilon$
    \For{step $ s{=}1\dots $}
       \State Sample batch of questions $\bm{q}{\sim}\mathcal{D}$ 
       \State Sample responses $\lbrace\bm{o} \sim \tarpol$ using the LLM policy.
       \State Set reference policy $\pi_{\text{old}}{=}\tilde{\pi}_\theta$ 
       \State Sparsify and save corresponding logits $\oldpol$ 
       \For{minibatch $b$}
            \State Compute and sparsify current logits $\tarpol$ 
            \State Estimate advantages $A_t$ using any advantage estimation method 
            \For{response tokens $o$ in parallel}
                \If {$\KL{\tarpol}{\oldpol} \leq \epsilon$}
                    \State Set $\pol = \tarpol$ \Comment{no projection needed} 
                \Else
                    \State Compute $\eta^*$ by numerically optimizing \autoref{eq:final_dual}.
                    \State Compute $\pol$ using \autoref{eq:primal} \Comment{project to $\oldpol$}
                 \EndIf
            \EndFor
            \State Update LLM parameters $\theta$ using \autoref{eq:troll_obj}
        \EndFor
    \EndFor
\end{algorithmic}
}

\label{algo:troll_pseudo}
\end{algorithm}
\revision{\textbf{Making Trust Region Projections Differentiable.}}
To propagate gradients through our projection, we can rely on autograd tools such as PyTorch~\citep{paszke2019pytorch}, except for the numerical optimization of the dual.
Formally, the optimal $\eta^*$ is a function of the \gls{llm} policy $\tarpol$. 
To obtain a fully differentiable projection, we need the gradient $\nicefrac{\partial \eta^*}{\partial \tarpol}$, which describes how the \gls{llm} output influences the optimal step size. 
We follow the OptNet framework~\citep{amos2017optnet} and differentiate the KKT conditions~\citep{karush1939minima, boyd2004convex} of the dual solution via implicit differentiation~\citep{dontchev2009implicit} and differential calculus~\citep{magnus1989matrix}.
This approach lets us compute a closed-form gradient instead of differentiating through the numerical optimization. 
\autoref{app_subsec:proj_gradient} provides derivations.
\revision{\autoref{app_sec:code} provides 
code snippets and a schematic of our projection's compute graph, including gradients.}

\subsection{Sparse and Efficient Representations of Token Distributions}

Naively implementing \gls{troll} requires storing and projecting the full vocabulary distribution for each token. 
Using Qwen3's tokenizer ~\citep{qwen2025qwen25technicalreport} as an example, this results in an overhead of $151\,936$ logits per token, which is prohibitively expensive.
To address this issue, we sparsify both the distributions and the implementation of the projection.
We greedily select the $K$ tokens with the largest probability mass, sort them by their mass, and then only retain those needed to cover a cumulative mass of $1{-}\delta$.  
We additionally always keep the token actually selected by the LLM policy to ensure gradient information for this token. 
The top-$K$ filtering both upper bounds the number of kept logits, acting as a fail-safe to prevent excessive memory usage for high-entropy predictions, and allows for efficient sorting of relevant tokens.
Since pre-trained \glspl{llm} generally have very low perplexity~\citep{kaplan2020scaling, hoffmann2022training, ruan2024observational}, this thresholding allows us to maintain almost all of the probability mass of the logit distribution with very few average kept logits.
Empirically, using $K{=}64$ and $\delta{=}10^{-5}$ usually allows us to keep $99.999\%$ of probability mass with only $5{-}10$ average tokens for most of the tested model and task combinations.
Finally, for the discarded tokens, we cannot assume a probability of truly $0$ but have to use a small default mass $p_d{>}0$ to maintain well-behaved distributions.
After sparsification, we re-normalize the kept tokens with \autoref{app_eq:sparse_default_def}, taking into account the number of non-kept tokens and default mass. 
We perform the sparsification in chunks of the full generated sequences to prevent memory spikes.

While greedily keeping the highest-probability tokens is intuitively useful, we additionally show in \autoref{thm:greater_logit} in \autoref{app_ssec:sparsification_theorems} that it yields best possible KL approximation under mild assumptions.
Additionally, under moderate assumptions, the error introduced by sparsification is bound by
\begin{equation}
        \KL{p}{q} \le \gamma^{-1} \KL{p'}{q'} + \delta \log \frac{ \delta }{ q_{\min} },
\vspace{-0.1cm}
\end{equation}
where $p$ and $q$ are arbitrary categorical distributions, $p', q'$ the corresponding sparsified distributions, $q_{\min} \le q(x_i)$ denotes a reference lower bound and $\gamma{\approx}1$ the renormalization constant.
\autoref{thm:topk_bound} provides the proof and demonstrates that, for the hyperparameters used in Qwen3, the error incurred by enforcing the trust region on the sparsified distributions rather than on the full distributions is approximately two orders of magnitude smaller than the bound itself.
The sparsification reduces memory and computation cost to the point where \gls{troll} only incurs minimal overhead on modern \glspl{llm}, making it a practically viable alternative to \gls{ppo}-like clipping.
Further, this overhead is constant in model size, causing its relative cost to diminish for larger models.
\autoref{fig:figure2} shows a schematic overview of \gls{troll}'s training, and 
\autoref{ssec:analysis} provides additional analysis of the sparsification and projection behavior in practice.

\begin{figure}[t]
    \input{fig/result_plots/dapo_qwen3_model_size/grouped2}
    \input{fig/result_plots/eurus_code_qwen3_model_size/grouped2_with_legend}
    \vspace{-0.18cm}
    \caption{
    Comparison of \textit{TROLL} (full lines) and \textit{Clip} (dashed lines) across \gls{grpo}-trained Qwen3 models with $600$M to $14$B parameters \revision{for \texttt{DAPO} (\textbf{top}) and \texttt{Eurus-Code} (\textbf{bottom})}. 
    Full-opacity lines mark smoothed results, while the background shows original values. 
    \textit{TROLL} consistently boosts training efficiency and final success rates \revision{on math-related questions and code generation tasks. These gains translate to different evaluation datasets.}%
    }
    \label{fig:qwen3_size}
    \vspace{-0.4cm}
\end{figure}

\section{Experiments}
\label{sec:experiments}

\textbf{Datasets.}
We evaluate \gls{troll} by finetuning \glspl{llm} for mathematical reasoning \revision{and code generation} using an \gls{rlvr} setup.
DAPO-Math~\citep{yu2025dapoopensourcellmreinforcement} consists of $17$ thousand math questions and answers that are obtained from web scraping and manual annotation.
\autoref{app_sec:example_generations} provides an example question.
We randomly split off $1024$ samples to provide an in-distribution evaluation dataset, and use the remaining samples for training.
We refer to those sets as \texttt{DAPO}-Eval and \texttt{DAPO}-Train, respectively. 
Additionally, we follow the evaluation setup of \citet{cui2025entropy} and use a suite of test datasets, which we call \texttt{Math}-Eval, comprised of MATH500~\citep{hendrycks2021measuring}, AMC, AIME2024~\citep{li2024numinamath}, AIME 2025, OMNI-MATH~\citep{gao2025omni}, OlympiadBench~\citep{he2024olympiadbench}, and Minerva~\citep{lewkowycz2022solving}.
As in previous work~\citep{cui2025entropy}, we report the mean of $32$ rollouts for the comparatively small AIME2024, AIME2025, and AMC datasets to reduce evaluation variance.
\texttt{GSM8K}~\citep{cobbe2021training} contains grade school math problems with final integer answers, consisting of $8.5$k training and $1.3$k test problems.
\revision{
Finally, we consider the \texttt{Eurus}-2-RL data~\citep{cui2025processreinforcementimplicitrewards}, which contains mathematical reasoning and code generation tasks.
We consider those separately, resulting in \texttt{Eurus-Math} and \texttt{Eurus-Code}.
For both datasets, we use the train and validation sets as is.}

\revision{The three mathematical reasoning} tasks range from comparatively simple grade school problems to complex math olympiad tasks, \revision{and the code generation demands understanding of techniques such as dynamic programming, data structures and amortized analysis}.
\autoref{app_ssec:dataset_results} provides further details on all datasets \revision{and the reward calculations.}

\textbf{Models.}
We experiment with Qwen3-\{$0.6$B, $1.7$B, $4$B, $8$B, $14$B\}~\citep{yang2025qwen3technicalreport}, which we use in thinking mode, and Qwen2.5-\{$0.5$B,$1.5$B,$3$B,$7$B\}-Instruct~\citep{qwen2025qwen25technicalreport}.
Furthermore, we include both the instruct and non-instruct versions of Llama-3.1-$8$B, Llama-3.2-$3$B~\citep{grattafiori2024llama3herdmodels}, and Apertus-$8$B~\citep{hernández2025apertus}.
Finally, we include Smol-LM3-$3$B~\cite{bakouch2025smollm3} and a version of Llama fine-tuned on FineMath~\citep{finemathllama2025}.
These models range from $500$M to $14$B parameters and cover different vocabulary sizes, tokenizers, model architectures, pre-training paradigms, and datasets, as well as initial math capabilities.

\textbf{Methods.}
We focus on \gls{grpo}~\citep{shao2024deepseekmath} due to its popularity and empirical success.  
We also include \gls{ppo}~\citep{schulman2017proximal}, which uses an explicit value model for Generalized Advantage Estimation~\citep{Schulman2015a}, and \revision{three} additional \gls{grpo} variants, namely \glsunset{drgrpo}\gls{drgrpo}~\citep{liu2025understanding}, \gls{gspo}~\citep{zheng2025group}, \revision{and \glsreset{rfpp}\gls{rfpp}~\citep{hu2025reinforcepp}}.  
All methods differ in their advantage estimation and in their normalization of %
\autoref{eq:pg}, yet they all rely on \gls{ppo}-like clipping, which makes them amenable to using \gls{troll}.
We compare each method's original clipping-based version to using \gls{troll} projections, denoted as (\textit{Clip})  and (\textit{\gls{troll}}), respectively. 
\revision{Finally, we compare to BAPO~\citep{xi2025bapo} as an adaptive clipping method, and GPG~\citep{chu2025gpg} as a clipping-free baseline. 
For the latter, we compare to the vanilla variant (GPG) and also add the \gls{troll} projection to its policy update (GPG (\textit{TROLL})).}

\textbf{Experiment Setup.}
\label{ssec:experiment_setup}
We base our experiments on the \texttt{verl} repository\footnote{\url{https://github.com/volcengine/verl}}, using default parameters and training recipes where applicable.
We set the group size for the advantage normalization of all methods to $8$. 
We use a token-level loss aggregation~\citep{yu2025dapoopensourcellmreinforcement} for \gls{ppo} and \gls{grpo}, and opt for method-specific loss aggregations for \gls{drgrpo} and \gls{gspo}.
We evaluate the test datasets every $10$ steps.
To improve visibility, we use sliding windows of size $7$ and $21$ for the train and test evaluations, respectively, while showing the unsmoothed values in the background.
\autoref{app_sec:setup} provides additional details, including hyperparameters in \autoref{tab:hp}. 
\autoref{app_sec:additional_results} shows all results. 

\begin{table}[t]
    \centering
\resizebox{\textwidth}{!}{%
\begin{tabular}{llccccc|ccccc}
& & \multicolumn{5}{c|}{Qwen3-$8$B} & \multicolumn{5}{c}{Qwen2.5-$7$B-Instruct} \\ 
& & GRPO & Dr.GRPO & PPO & GSPO & \revision{RF++} & GRPO & Dr.GRPO & PPO & GSPO & \revision{RF++}   \\
\midrule
\texttt{DAPO} & \textit{Clip}  
    & $0.667$ & $0.678$ & $0.640$ & $0.000$ & \revision{$0.648$} & $0.443$ & $0.467$ & \cellcolor{blue!15}$0.444$ & $0.159$ & \revision{$0.429$}  \\
Train & \textit{TROLL} 
    & \cellcolor{blue!15}$0.721$ & \cellcolor{blue!15}$0.704$ & \cellcolor{blue!15}$0.744$ & \cellcolor{blue!15}$0.736$ & \cellcolor{blue!15}\revision{$0.742$}
    & \cellcolor{blue!15}$0.495$ & \cellcolor{blue!15}$0.513$ & $0.431$ & \cellcolor{blue!15}$0.481$  & \cellcolor{blue!15}\revision{$0.486$}\\
\midrule
\texttt{DAPO} & \textit{Clip}  
    & $0.640$ & $0.653$ & $0.602$ & $0.000$ & \revision{$0.626$} & $0.323$ & $0.331$ & $0.324$ & $0.093$  & \revision{$0.323$} \\
Eval. & \textit{TROLL} 
    & \cellcolor{blue!15}$0.691$ & \cellcolor{blue!15}$0.674$ & \cellcolor{blue!15}$0.715$ & \cellcolor{blue!15}$0.706$ &\cellcolor{blue!15}\revision{$0.728$}
    & \cellcolor{blue!15}$0.389$ & \cellcolor{blue!15}$0.389$ & \cellcolor{blue!15}$0.353$ & \cellcolor{blue!15}$0.390$ & \cellcolor{blue!15}\revision{$0.380$} \\
\midrule
\texttt{MATH} & \textit{Clip}  
    & $0.541$ & \cellcolor{blue!15}$0.549$ & $0.508$ & $0.000$ & \revision{$0.520$} & $0.313$ & $0.317$ & $0.319$ & $0.127$ & \revision{$0.311$} \\
Eval. & \textit{TROLL} 
    & \cellcolor{blue!15}$0.551$ & $0.546$ & \cellcolor{blue!15}$0.591$ & \cellcolor{blue!15}$0.580$ & \cellcolor{blue!15}\revision{$0.578$}
    & \cellcolor{blue!15}$0.350$ & \cellcolor{blue!15}$0.359$ & \cellcolor{blue!15}$0.349$ & \cellcolor{blue!15}$0.333$ &  \cellcolor{blue!15}\revision{$0.344$} \\
\bottomrule
\end{tabular}
}
    \caption{
    Final train and evaluation success rates on \texttt{DAPO} for Qwen3-$8$B and Qwen2.5-$7$B-Instruct methods for different advantage estimation methods for \textit{\gls{troll}} and \textit{Clip}.
    The better approach is marked in \cellcolor{blue!15}blue.
    \textit{\gls{troll}} significantly improves over \textit{Clip} in most cases, and is able to successfully train \gls{gspo}, where \textit{Clip} causes divergence and little to no success rates on both models.
    }
    \label{tab:qwen_methods}
\end{table}

\section{Results}
\label{sec:results}

\textbf{Qwen Experiments on} \texttt{DAPO-Math}.
\label{ssec:results_qwen}
We evaluate models from the Qwen3 and Qwen2.5-Instruct families~\citep{qwen2025qwen25technicalreport, yang2025qwen3technicalreport} on \texttt{DAPO} \citep{yu2025dapoopensourcellmreinforcement}. 
\revision{The top of} \autoref{fig:qwen3_size} compares \textit{TROLL} and the \textit{Clip} objective for different Qwen3 model sizes optimized with GRPO~\citep{shao2024deepseekmath}.
\textit{TROLL} consistently improves training performance, causing more sample-efficient training and improved success rates at convergence for all models.
These results directly translate to both evaluation sets, \texttt{DAPO}-Eval and \texttt{MATH}-Eval. 
Interestingly, the $4$B \textit{TROLL} model almost matches the performance of the $14$B \textit{Clip} one.
\autoref{app_fig:qwen2_5_size} in \autoref{app_ssec:qwen_results} shows similar performance trends across Qwen2.5-Instruct model sizes. 
The right of \autoref{fig:figure1} further compares the runtime of both variants on Qwen3-$14$B, showing that \gls{troll}'s projections do not incur a significant computational overhead.
Finally, \autoref{app_sec:example_generations} provides example sequences generated by Qwen3-$14$B on a \texttt{MATH}-Eval problem.

\autoref{tab:qwen_methods} compares \textit{\gls{troll}} and \textit{Clip} results for Qwen3-$8$B and Qwen2.5-$7$B-Instruct for \gls{grpo}, \gls{drgrpo}, \gls{ppo}, \gls{gspo}\revision{, and \gls{rfpp}}.
\textit{\gls{troll}} generally improves success rates by $3$-$10$ \revision{percentage points} across methods and evaluated datasets.
\revision{In these experiments, choosing \textit{\gls{troll}} over \textit{Clip} is usually more beneficial than selecting any of the considered advantage estimation methods.}
\autoref{app_tab:qwen_methods_tasks} provides results for the individual \texttt{MATH} datasets, while \autoref{app_fig:qwen3_methods} and \autoref{app_fig:qwen2_5_methods} show full training curves for Qwen3-$8$B and Qwen2.5-$7$B-Instruct, respectively.
\revision{Both figures show that \gls{gspo} (\textit{Clip}) eventually diverges during training}, while \gls{gspo} (\textit{\gls{troll}}) remains stable and achieves similar success rates to the other advantage estimation methods.

\revision{\textbf{Qwen Experiments on} \texttt{Eurus-Code}.
The bottom of \autoref{fig:qwen3_size} shows that \textit{\gls{troll}} is directly applicable to code generation tasks, yielding substantial advantages over \textit{Clip} for all model sizes. 
For instance, our Qwen3-$1.7$B (\textit{\gls{troll}}) improves over Qwen3-$4$B (\textit{Clip}), and significantly outperforms the model of its own size.
Specifically, we see improvements of $7{-}18$ percentage points, which translate to a $18{-}30\%$ relative gain, when using \textit{\gls{troll}} instead of \textit{Clip}.} 

\revision{\textbf{Additional Model Families.}}
\autoref{fig:other_models} shows various models of different families and sizes on \texttt{GSM8K}, again indicating a clear benefit for \textit{\gls{troll}} over the \textit{Clip} objective.
Here, models of the Llama family often need a significant number of training steps before \textit{Clip} shows a positive training signal, while \textit{\gls{troll}} causes the models to start learning much faster.
\revision{\textit{\gls{troll}} further enables stable and fast training for Apertus, while \textit{Clip} fails in some setups.}
\autoref{app_ssec:more_models} provides additional results on more models and the \texttt{GSM8K} dataset.
We omit evaluations for \texttt{DAPO} with models other than SmolLM3-$3$B, as none matched the performance of Qwen3-$1.7$B in preliminary \textit{Clip} experiments.

\revision{\textbf{Additional Math Datasets.}}
Considering other \revision{math} datasets, the top rows on the left of \autoref{fig:other_models} shows that \textit{\gls{troll}} is also beneficial on other datasets, as evaluated on \revision{\texttt{Eurus-Math}} for Qwen3-$8$B and the simpler \texttt{GSM8K} for Qwen3-$0.6$B.
\autoref{app_ssec:dataset_results} provides detailed success rates for \revision{\texttt{Eurus-Math}} in \autoref{app_fig:eurus_qwen3_8b} and additional results on \texttt{GSM8K} for larger Qwen3 models in \autoref{app_fig:gsm8k_qwen3}. 

\revision{\textbf{Additional Training Algorithms.}
\autoref{app_fig:qwen3_baselines} in \autoref{app_ssec:additional_training_algos} shows that adaptive clipping via BAPO~\citep{xi2025bapo} slightly improves over regular \textit{Clip} on evaluation data, but still clearly underperforms \textit{\gls{troll}}.
It also shows preliminary results on how \textit{\gls{troll}} can benefit non-clipping-based policy gradient methods such as GPG~\citep{chu2025gpg}. 
While vanilla GPG suffers from stability issues in our experiments, these issues are resolved by adding our differentiable trust region projections, resulting in performance comparable to GRPO (\textit{\gls{troll})}.}

\begin{figure}
\definecolor{darkgray176}{RGB}{176,176,176}
\definecolor{lightgray204}{RGB}{204,204,204}
\definecolor{steelblue49130189}{RGB}{49,130,189}
\definecolor{crimson2224538}{RGB}{222,45,38}
\definecolor{darkgray176}{RGB}{176,176,176}
\definecolor{orange}{RGB}{255,165,0}

\begin{tikzpicture}
    \begin{axis}[
        hide axis,
        xmin=0,
        xmax=1,
        ymin=0,
        ymax=1,
        legend columns=8,
        legend cell align=left,
        font=\small,
        legend style={
            draw=none,
            column sep=1ex,
            line width=1pt
        }
    ]
    
    \addlegendimage{ultra thick, orange}
    \addlegendentry{Apertus-$8$B Instr.}
    \addlegendimage{ultra thick, crimson2224538} 
    \addlegendentry{Llama3.2-$3$B}
    \addlegendimage{ultra thick, steelblue49130189}
    \addlegendentry{SmolLM-$3$B}
    
    \addlegendimage{empty legend}
    \addlegendentry{~}
    \addlegendimage{ultra thick, gray, dashed}
    \addlegendentry{\textit{Clip}}
    \addlegendimage{ultra thick, gray}
    \addlegendentry{\textit{TROLL}}
    
    \end{axis}
\end{tikzpicture}\\
    \begin{minipage}{0.53\textwidth}
    \vspace{-\baselineskip}
    \small{
    \begin{tabularx}{\linewidth}{l>{\centering\arraybackslash}Xcc}
    \toprule
    Model & \texttt{Dataset} & \textit{Clip} & \textit{TROLL} \\
    \midrule
    Qwen3-$0.6$B & \texttt{GSM8K} & 0.828 & \cellcolor{blue!15}0.833 \\
    Qwen3-$8$B & \mbox{\hspace{-0.9em} \texttt{\revision{Eurus-Math}}} & 0.561 & \cellcolor{blue!15}0.579  \\
    \midrule
    SmolLM3-$3$B & \texttt{GSM8K} & 0.915 & \cellcolor{blue!15}0.925 \\
    SmolLM3-$3$B & \texttt{DAPO} & 0.580 & \cellcolor{blue!15}0.606 \\
    \midrule 
    Llama3.2-$3$B   & \texttt{GSM8K} & 0.589 & \cellcolor{blue!15}0.668 \\
    Llama3.2-$3$B Instr.  & \texttt{GSM8K} & 0.836 & \cellcolor{blue!15}0.850 \\ 
    FineMath-Llama-$3$B & \texttt{GSM8K} & \cellcolor{blue!15}0.750 & 0.746 \\ 
    \midrule
    Llama3.1-$8$B  & \texttt{GSM8K} & 0.000 & \cellcolor{blue!15}0.759 \\ 
    Llama3.1-$8$B Instr.  & \texttt{GSM8K} & 0.855 & \cellcolor{blue!15}0.872 \\    
    \midrule
    Apertus-$8$B & \texttt{GSM8K} & 0.156 & \cellcolor{blue!15}0.697 \\
    Apertus-$8$B Instr. & \texttt{GSM8K} & 0.688 & \cellcolor{blue!15}0.824 \\
    \bottomrule
    \end{tabularx}
    }
    \end{minipage}%
    \begin{minipage}{0.47\textwidth}  
    \input{fig/other_models_main/gsm8k_cherry}
    \begin{tikzpicture}

\definecolor{darkgray176}{RGB}{176,176,176}
\definecolor{lightgray204}{RGB}{204,204,204}
\definecolor{steelblue49130189}{RGB}{49,130,189}

\begin{axis}[
width=\textwidth,
height=3.2cm,
legend cell align={left},
legend style={
  font=\small,
  fill opacity=0.8,
  draw opacity=1,
  text opacity=1,
  at={(0.97,0.03)},
  anchor=south east,
  draw=lightgray204
},
tick align=outside,
tick pos=left,
title={\texttt{DAPO} Eval},
x grid style={darkgray176},
xlabel={Training Step},
xmajorgrids,
ylabel={Success Rate},
xmin=-4.5, xmax=254.5,
xtick style={color=black},
y grid style={darkgray176},
xlabel style={font=\small},
ylabel style={font=\small},
ymajorgrids,
ymin=0.441046966731898, ymax=0.640166340508806,
tick label style={font=\small},
yticklabel style={
/pgf/number format/fixed,
/pgf/number format/fixed zerofill,
/pgf/number format/precision=2
},
ylabel shift = -0.1cm,
xlabel shift = -0.1cm,
title style={yshift=-0.2cm, xshift=-0.2cm, font=\small}]
]
]
\addplot [ultra thick, steelblue49130189, opacity=0.2, dashed, forget plot]
table {%
10 0.450097847358121
20 0.48238747553816
30 0.515655577299413
40 0.524461839530333
50 0.547945205479452
60 0.562622309197652
70 0.563600782778865
80 0.587084148727984
90 0.581213307240705
100 0.593933463796477
110 0.583170254403131
120 0.585127201565558
130 0.589041095890411
140 0.578277886497065
150 0.580234833659491
160 0.584148727984344
170 0.580234833659491
180 0.585127201565558
190 0.582191780821918
200 0.574363992172211
210 0.598825831702544
220 0.576320939334638
230 0.575342465753425
240 0.585127201565558
250 0.577299412915851
260 0.580234833659491
270 0.577299412915851
280 0.586105675146771
290 0.577299412915851
300 0.580234833659491
};
\addplot [ultra thick, steelblue49130189, opacity=0.2, forget plot]
table {%
10 0.484344422700587
20 0.545988258317025
30 0.551859099804305
40 0.600782778864971
50 0.588062622309198
60 0.610567514677104
70 0.607632093933464
80 0.611545988258317
90 0.602739726027397
100 0.622309197651663
110 0.603718199608611
120 0.60958904109589
130 0.61252446183953
140 0.620352250489237
150 0.616438356164384
160 0.613502935420744
170 0.620352250489237
180 0.631115459882583
190 0.596868884540117
200 0.602739726027397
210 0.62426614481409
220 0.590998043052838
230 0.602739726027397
240 0.602739726027397
250 0.597847358121331
260 0.604696673189824
270 0.605675146771037
280 0.617416829745597
};
\addplot [ultra thick, steelblue49130189]
table {%
10 0.545743639921722
20 0.554207436399217
30 0.563600782778865
40 0.569890970086665
50 0.588062622309198
60 0.596169974839251
70 0.606234274531731
80 0.60665362035225
90 0.609728823036064
100 0.61000838691641
110 0.611825552138664
120 0.61252446183953
130 0.614062063181437
140 0.61378249930109
150 0.617696393625944
160 0.61587922840369
170 0.614481409001957
180 0.61504053676265
190 0.611406206318144
200 0.609868604976237
210 0.607352530053117
220 0.602599944087224
230 0.603718199608611
240 0.604137545429131
250 0.603159071847917
260 0.605185909980431
270 0.605675146771037
280 0.606409001956947
};
\addplot [ultra thick, steelblue49130189, dashed]
table {%
10 0.493150684931507
20 0.504109589041096
30 0.513861709067189
40 0.520967291025999
50 0.540536762650266
60 0.554654738607772
70 0.565837293821638
80 0.574224210232038
90 0.579535923958625
100 0.583310036343304
110 0.585406765445904
120 0.584428291864691
130 0.584847637685211
140 0.582890690522785
150 0.583170254403131
160 0.582750908582611
170 0.580654179480011
180 0.583589600223651
190 0.583030472462958
200 0.581772435001398
210 0.582471344702265
220 0.581353089180878
230 0.581073525300531
240 0.581492871121051
250 0.579675705898798
260 0.579815487838971
270 0.580514397539838
280 0.579745596868885
290 0.580234833659491
300 0.580234833659491
};
\end{axis}

\end{tikzpicture}
    \end{minipage}
    \vspace{-0.7cm}
    \caption{
    (\textbf{Left}) Final evaluations for \textit{\gls{troll}} and \textit{Clip} for different combinations of models and datasets trained with \gls{grpo}.
    The better approach between \textit{\gls{troll}} and \textit{Clip} is marked in \cellcolor{blue!15}blue.
    (\textbf{Right}) Comparison of \textit{TROLL} (full lines) and the \textit{Clip} objective (dashed lines) for different models trained with GRPO.
    \textit{\gls{troll}} generally improves over \textit{Clip}, and performs well across all considered datasets. 
    In particular, \textit{\gls{troll}} leads to significantly faster learning for different Llama models, where \textit{Clip} often takes significantly more iterations to obtain a positive training signal.
    \textit{\gls{troll}} also showcases more stable performance compared to \textit{Clip} throughout training.
    }
    \label{fig:other_models}
    \vspace{-0.3cm}
\end{figure}

\section[Analysis and Parameter Study]{Analysis \revision{and Ablations}}
\label{ssec:analysis}

\textbf{KL Bounds and Sparsity Threshold \revision{Experiments}.}
We explore different values for the \gls{kl} bound $\epsilon$ and the maximum number of kept tokens $K$ in the sparsification process for Qwen3-$8$B trained with \gls{grpo} on the \texttt{DAPO} dataset.
The left of \autoref{fig:ablations_analysis} finds that a lower \gls{kl} slows down training but does not affect convergence, while a higher \gls{kl} leads to worse success rates, likely due to too large policy updates.
A small $K{=}16$ causes poor updates, presumably due to poor estimates of the underlying dense distributions, while a larger $K{=}256$ increases cost but does not improve over our default $K{=}64$.
\autoref{app_fig:ablations} in \autoref{app_ssec:ablations} provides additional detail.
These results suggest that \textit{\gls{troll}} requires an accurate \gls{kl} projection, while showing that there is a wide range of suitable hyperparameters for both the \gls{kl} bound and the sparsification.
Finally the top row of \autoref{app_fig:analysis} shows that $5{-}10$ tokens are usually sufficient to capture $99.999\%$ of logit probabilities.

\revision{\textbf{Batch Size.}
\autoref{app_fig:batch_size} in \autoref{app_ssec:batch_size} compares the training behavior of \textit{\gls{troll}} and \textit{Clip} for different batch sizes. 
We find that \textit{\gls{troll}} can easily deal with larger batch sizes and thus less recent data in its optimization, while \textit{Clip} clearly degrades when increasing the batch size.}

\textbf{Output Diversity and Entropy.} 
Recent work has shown that the \gls{ppo}-like \textit{Clip} objective tends to exploit the \gls{llm}'s existing knowledge by reducing each token distribution's entropy to increase the reward~\citep{cui2025entropy}.
In contrast, the bottom right of \autoref{fig:ablations_analysis} shows that \gls{troll} preserves entropy.
\revision{Additionally, \autoref{app_fig:eurus_code_ent} in \autoref{app_ssec:analysis} shows a clear correlation between \gls{troll}'s ability to preserve entropy and improve performance on \texttt{Eurus-Code}.}

\textbf{Projection Fraction.}
Comparing the fraction of clipped tokens for \textit{Clip} with the fraction of projected tokens for {\gls{troll} shows that both trust region approaches roughly affect the same number of tokens.
The observed stability improvements are thus not merely caused by more restrained tokens.
We compare both ratios for larger Qwen3 models in the middle row of \autoref{app_fig:analysis}.

\textbf{Response Length.} 
The lower row of \autoref{app_fig:analysis} shows that \gls{troll} adapts response length more quickly to ranges suitable for solving the tasks. 
This faster adaption reflects the faster performance improvements achieved by \gls{troll}.

\textbf{Computational Overhead.} \autoref{app_ssec:troll_overhead} provides a controlled experiment setup for measuring \gls{troll}'s computational overhead. 
We find on the top right table of \autoref{fig:ablations_analysis} that the memory overhead of maintaining sparse distributions is negligible compared to storing and backpropagating through the \gls{llm}, as explained in \autoref{app_eq:naive_sparse_overhead}. 
Further, both memory and computation time for \gls{troll} scale only with the vocabulary size, which is constant for most model families. We thus find that \gls{troll}'s overhead diminishes as model size increases.
\autoref{app_tab:troll_overhead} provides detailed evaluations.

\begin{figure}
    \centering
    \begin{minipage}[c]{0.5\textwidth}
    \begin{tikzpicture}
\definecolor{darkcyan0119182}{RGB}{0,119,182}
\definecolor{darkgray176}{RGB}{176,176,176}
\definecolor{darkgreen279432}{RGB}{27,94,32}
\definecolor{darkorange25511126}{RGB}{255,111,26}
\definecolor{lightgray204}{RGB}{204,204,204}
\definecolor{saddlebrown168670}{RGB}{168,67,0}
\definecolor{yellowgreen12621787}{RGB}{126,217,87}
\begin{axis}[
        hide axis,
        xmin=0,
        xmax=1,
        ymin=0,
        ymax=1,
        legend columns=3,
        legend cell align=left,
        font=\small,
        legend style={
            draw=none,
            column sep=1ex,
            line width=1pt
        }
    ]
\addlegendimage{ultra thick, darkcyan0119182}
\addlegendentry{\textit{TROLL}}
\addlegendimage{ultra thick, yellowgreen12621787}
\addlegendentry{$\epsilon=0.01$}
\addlegendimage{ultra thick, darkgreen279432}
\addlegendentry{$\epsilon=0.25$}

\addlegendimage{empty legend}
\addlegendentry{~}
\addlegendimage{ultra thick, darkorange25511126}
\addlegendentry{$K=256$}
\addlegendimage{ultra thick, saddlebrown168670}
\addlegendentry{$K=16$}
\end{axis}
\end{tikzpicture}
    \input{fig/analysis_main/agg_MATH_Eval_runtime_no_legend}
    \end{minipage}%
    \begin{minipage}[c]{0.5\textwidth}
    \centering
        \small{
        \begin{tabular}{lcc}
         Qwen3-$4$B        &  \textit{Clip} & \textit{Troll}  \\
                 \toprule
         VRAM    & $34.574$~GiB & $36.157$~GiB \\
         Runtime & $85.133$~s  & $92.906$~s   \\ 
         \bottomrule        
        \end{tabular}
    }
    \vfill
    \input{fig/analysis_main/entropy}
    \end{minipage}
    \caption{
        (\textbf{Left}) Qwen3-$1.7$B trained with GRPO using the \textit{\gls{troll}} projection compared to different hyperparameter choices.
        \textit{\gls{troll}} works well for conservative \gls{kl} bounds $\epsilon$ and top-$K$ logit selections, but is slower for too conservative values and degrades slightly for too aggressive updates or token pruning.
        (\textbf{Top Right}) Memory and runtime comparison between \textit{\gls{troll}} and \textit{Clip} in a controlled environment. \textit{\gls{troll}} imposes a modest overhead compared to the cost of training the \gls{llm} parameters.
        (\textbf{Bottom Right}) \textit{\gls{troll}} generally maintains more entropy during training while showing higher success rates when compared to \textit{Clip}, as shown for Qwen3-$14$B.
    }
    \label{fig:ablations_analysis}
\end{figure}

\section{Conclusion}
\label{sec:conclusion}

We introduce \glsfirst{troll}, a trust-region based policy gradient objective that acts as a drop-in replacement for the popular \gls{ppo}-clip.
\gls{troll} is based on a novel, principled, and fully differentiable trust-region projection for discrete distributions.
This projection compares two distributions, in our case, the token logit distributions of an old policy used to collect sequences, and a new policy that performs policy gradient updates on these sequences.
Since these distributions are prohibitively large for modern vocabulary sizes, we extend the projection to sparse distributions. 
Here, we only keep a small subset of logits that represent the most likely token predictions, allowing us to realize both data collection and the projection objective using fully sparse operations.
We experimentally validate \gls{troll} across various model families, model sizes, advantage estimation methods, and \revision{math and code generation} datasets. 
\gls{troll} \revision{significantly and} consistently outperforms the \gls{ppo}-clip objective in terms of sample efficiency and final reward across setups, while only requiring a small overhead that does not scale with model size. 

\textbf{Limitations and Future Work.}
We currently evaluate our method on dense models with up to $14$B parameters. In future work, we want to scale \gls{troll} to larger models and Mixture-of-Experts architectures.
Similarly, it would be interesting to extend \gls{troll} to other modalities and tasks, using, for example, vision-language models, where the logit distributions and their projections may behave differently from pure language.

\clearpage
\bibliography{iclr2026_conference}
\bibliographystyle{iclr2026_conference}

\appendix
\section*{Ethics Statement}
\gls{troll} improves the efficiency of LLM finetuning by enabling scalable trust-region optimization. 
While our experiments focus on mathematical reasoning, the method is broadly applicable to other domains.
As with any advance in LLM training, this carries both potential benefits and risks, depending on the context of deployment.
We believe that managing and shaping the societal impacts of increasingly powerful LLMs should not be left to individual researchers, organizations, or companies alone, but they must be carefully governed and regulated by sovereign governments and strong democratic institutions.

\section*{Reproducibility Statement}
All experiments in this paper rely on publicly available pretrained checkpoints.
We exclusively use publicly available datasets.
While some were modified, we describe these modifications and will release the processed versions upon the deanonymization of the paper.
Further information, together with additional hyperparameters and training details, are provided in \autoref{app_sec:setup}.
Our implementation builds on open-source repositories and will be made available after deanonymization.

\section*{On LLM Usage}
We used LLMs to assist with revising grammar, style, and text flow in this manuscript.
In addition, we employed LLMs to support aspects of the implementation and generate visualizations for this manuscript. 

\section*{Acknowledgements}
This work was supported by the European Research Council (ERC) under the European Union’s Horizon Europe programme through the project SMARTI³ (Grant Agreement No. 101171393).
The authors acknowledge support by the state of Baden-Württemberg through bwHPC, as well as the HoreKa supercomputer funded by the Ministry of Science, Research and the Arts Baden-Württemberg and by the German Federal Ministry of Education and Research.

\section{Derivations}
\label{app_sec:derivations}
For each output token $o_t$ the trust region projection layer solves 
\begin{align}
   & \underset{\pol}{\arg\min} \KL{\pol}{\tarpol} \label{eq:opt}   \\ 
    \text{s.t.} ~~  &\KL{\pol}{\oldpol} < \epsilon ~~ \text{and} ~~ \sum_{o_t} \left[ \pol \right] = 1. \nonumber
\end{align}
Here, the first constraint enforces the trust region to the previous distribution $\oldpol$ and the second constraint ensures the resulting distribution is properly normalized.
We solve the constrained optimization problem using the method of Lagrangian multipliers and start with the primal solution.

\subsection{Primal Solution}
\label{app_sec:troll_primal}

To compute the primal solution of this optimization problem, we first set up the Lagrangian function by introducing Lagrangian multipliers $\eta > 0$ and $\lambda$ for the first and second constraint, respectively.
The corresponding Lagrangian is given as
\begin{align}
\mathcal{L}&(\pol, \eta)  \nonumber \\ =& \KL{\pol}{\tarpol} - \eta (\epsilon - \KL{\pol}{\oldpol} \nonumber \\  & - \lambda \left(1 -  \sum_{o_t} \left[\pol  \right]  \right) \nonumber \\
=& - (\eta \epsilon + \lambda) + \sum_{o_t} \left[\pol  \left( \log \dfrac{\pol}{\tarpol} + \eta \log \dfrac{\pol}{\oldpol} + \lambda \right) \right] \nonumber \\
=& -(\eta \epsilon + \lambda) +  \label{eq:lagrangian}\\ 
& \!\sum_{o_t}  \bigl[ \pol \bigl( (\eta + 1) \log \pol - \log \tarpol - \eta \log \oldpol + \lambda \bigr) \bigr] \nonumber.
\end{align}
We can now obtain the optimal primal solution to \autoref{eq:opt} by taking the derivative of the Lagrangian w.r.t. $\pol$, setting it to $0$, and solving for $\pol$. 
The derivative is given by
\begin{align*}
    & \dfrac{\partial \mathcal{L}(\pol, \eta)}{ \partial \pol}  \\ 
    =& \sum_o \left[ (\eta + 1) + (\eta + 1) \log \pol - (\log \tarpol + \eta \log \oldpol) + \lambda \right].
\end{align*}

Clearly $\nicefrac{\partial \mathcal{L} (\pol, \eta)}{\partial \pol} = 0 $ if all the individual terms of the sum are $0$. Thus, the problem simplifies to 
\begin{align*}
  0 = (\eta + 1) + (\eta + 1) \log \pol - (\log \tarpol + \eta \log \oldpol) + \lambda
\end{align*}
which yields 
\begin{align}
\log \pol = \dfrac{\log \tarpol + \eta \log \oldpol) - (\eta + 1 + \lambda)}{\eta + 1} \label{eq:full_log_primal_sol}
\end{align}
and thus 
\begin{align}
\pol & = \exp \left( \dfrac{\log \tarpol + \eta \log \oldpol}{\eta + 1} \right) \exp\left( - \dfrac{\eta + 1 + \lambda}{\eta + 1}\right) \nonumber \\ 
& \propto \exp \left( \dfrac{\log \tarpol + \eta \log \oldpol}{\eta + 1} \right) \label{eq:primal_sol}
\end{align}
Crucially, this primal solution allows computing a properly normalized distribution $\pol$ without explicitly computing $\lambda$ by replacing the $\exp$ in \autoref{eq:primal_sol} with a softmax. 

\subsection{Dual Solution}
\label{app_sec:troll_dual}
The second step of the Lagrangian multiplier method is to solve the dual problem which finds the optimal dual parameters given the primal solution.
To that end, we insert the primal solution from \autoref{eq:full_log_primal_sol} into the Lagrangian (\autoref{eq:lagrangian}). Here most terms cancel out, leading to a dual of the form 
\begin{align}
    D(\eta, \lambda) = - \eta\epsilon -\lambda - \eta - 1 = -\eta\epsilon - (\eta + 1 + \lambda) \label{eq:dual_part1}.
\end{align}
In a second step towards a practically usable dual, we remove the dependency on $\lambda$ by exploiting the constraint it enforces, i.e., $\sum_{o_t}\left[\pol\right] = 1$. Going to log space and again using \autoref{eq:full_log_primal_sol}, this property yields
\begin{align*}
    0 & = \log \sum_{o_t}\left[\pol\right] \\ & =  \log \sum_{o_t} \left[ \exp \left( \dfrac{\log \tarpol + \eta \log \oldpol}{\eta + 1} \right) \exp\left( - \dfrac{\eta + 1 + \lambda}{\eta + 1}\right) \right]  \\ 
    & = - \dfrac{\eta + 1 + \lambda}{\eta + 1} + \log \sum_{o_t} \left[ \exp \left( \dfrac{\log \tarpol + \eta \log \oldpol}{\eta + 1} \right) \right]
\end{align*}
which we can rewrite as
\begin{align}
 \eta + 1 + \lambda  = (\eta + 1)\log \sum_{o_t} \left[ \exp \left( \dfrac{\log \tarpol + \eta \log \oldpol}{\eta + 1} \right) \right] \label{eq:dual_part2}.
\end{align}
Now, inserting \autoref{eq:dual_part2} into \autoref{eq:dual_part1} removes the dependency on $\lambda$ leading to
\begin{align*}
    D(\eta) = - \eta \epsilon - (\eta +1)\log \sum_{o_t} \left[ \exp \left( \dfrac{\log \tarpol + \eta \log \oldpol}{\eta + 1} \right) \right].
\end{align*}
Using this dual, we can find the optimal $\eta^*$ by solving
\begin{align}
    \eta^* = \arg \max_\eta D(\eta) ~~ s.t. ~~ \eta \geq 0 \label{eq:final_dual}. 
\end{align}
We can efficiently optimize this scalar optimization problem using the $n$-ary bracketing method described in \autoref{lst:brack}. 

\subsection{Gradients}\label{app_subsec:proj_gradient}

This trust region projection is trivially differentiable using standard autograd tools, except for the numerical optimization of the dual to find the optimal step size $\eta^*$. 
Towards differentiating through this optimization problem in closed form, let us first change perspective and no longer consider the distributions $\pol$, $\oldpol$, and $\tarpol$ directly but vectors $\probs$, $\oldlogits$, and $\tarlogits$.
Here $q$ corresponds to the probabilities of $\pol$ while $\oldlogits$ and $\tarlogits$ denote to the normalized logits of $\oldpol$ and $\tarpol$. 
We further assume all $3$ vectors are normalized, i.e.,
$$
\sum \probs = 1, \quad \sum \exp{\oldlogits} =1 \quad \textrm{and} \quad \sum \exp{\tarlogits} = 1.
$$
While this notation may seem slightly unintuitive at first, it simplifies the following derivations.
As we assume the $\oldpol$ and consequently $\oldlogits$ are constant, the only gradient we are interested in is $\dfrac{\partial \eta^*}{\partial \tarlogits}$, i.e., how the original LLM's output influences the optimal step size $\eta^*$. 
Since we do not have an analytical form for the optimal step size $\eta^*$ but only the result of the numerical optimization, we need to introduce further analytical properties.
Using the implicit differentiation~\citep{dontchev2009implicit} and differentiable matrix calculus~\citep{magnus1989matrix} techniques introduced to deep learning by OptNet~\citep{amos2017optnet}, we start by writing out the Karush–Kuhn–Tucker (KKT) conditions~\citep{karush1939minima} of the dual in \autoref{eq:final_dual} for the optimum at $\eta^*$. 
Denoting the Lagrangian multiplier corresponding to the $\eta \geq 0$ constraint by $\mu$ and realizing that $\nabla D(\eta) = \epsilon - \KL{\pol}{\oldpol} =\eps - \probs^T( \log \probs - \oldlogits)$, those are given by 
\begin{align*}
    \underbrace{\nabla D(\eta^*) + \mu \nabla(-\eta^*) = \epsilon -  \probs^T( \log \probs - \oldlogits)- \mu =  0}_{\textrm{Stationarity}} \qquad \text{and} \quad \underbrace{ \mu (-\eta^*) = 0}_{\textrm{Complementary Slackness}}\!\!.
\end{align*}
As there is no equality constraint in \autoref{eq:final_dual}, primal feasibility is given by default. 
We can now take the total differentials around these conditions, which are given by
\begin{align}
    0 &= d \left( \epsilon -  \probs^T( \log \probs - \oldlogits) - \mu \right) = - d \left( \probs^T( \log \probs - \oldlogits) \right) - d\mu \label{stat:1}\\ 
    0 &= d(\mu(-\eta^*) = d\mu(-\eta^*) + \mu(-d\eta^*)    \label{cs:1},
\end{align}
where $d\epsilon$ vanishes as it is constant. 
Before proceeding, we need to rewrite the KL term $d \left( \probs^T(\log \probs - \oldlogits) \right)$ in terms of $\tarlogits$ and simplify.
First, we have 
\begin{align}
    d \left( \probs^T(\log \probs - \oldlogits) \right) = (1 + \log \probs - \oldlogits)^T d\probs \label{eq:diff1}
\end{align}
and need to continue with the differential $d\probs$.
Again using the primal solution \autoref{eq:primal_sol}, we get
\begin{align}
\probs = \textrm{softmax} \left( \dfrac{\eta^* \oldlogits + \tarlogits}{\eta^* + 1} \right). \label{eq:primal_as_softmax}
\end{align}
Assuming the old logits are a constant, we can write the corresponding differential as 
\begin{align*}
    d \probs = \dfrac{\partial \probs}{\partial \tarlogits} d\tarlogits + \dfrac{\partial q}{\partial \eta^*} d\eta^*.
\end{align*}
Inserting this term into \autoref{eq:diff1} and the plugging the result into \autoref{stat:1} yields
 \begin{align}
 - \left(1 + \log \probs - \oldlogits \right)^T \dfrac{\partial \probs}{\partial \eta^*} d\eta^* - d\mu &= \left( 1 + \log \probs - \oldlogits \right) \dfrac{\partial \probs}{\partial \tarlogits} d \tarlogits  \label{eq:stat2} \\
 -\mu d \eta^* - \eta^* d\mu &= 0,  \label{eq:cs2}
\end{align}
which we can use to compute the desired gradient $\dfrac{\partial \tarlogits}{\partial \eta^*}$. 
To this end, we consider two separate cases. 
First, if the original KL trust region is not violated, then $\eta^* = 0$ and $\mu > 0$. In this case, \autoref{eq:cs2} directly yields that $d\eta^* = 0$ and thus the entire gradient  $ \dfrac{\partial \eta^*}{\partial \tarlogits}$ is zero.
Second, the original KL trust region constraint is active and thus $\eta^* > 0$ and $\mu = 0$. In this case \autoref{eq:cs2} gives $d \mu = 0$ which simplifies \autoref{eq:stat2}. Reordering the remaining terms gives the required gradient
\begin{align*}
    \dfrac{\partial \eta^*}{\partial \tarlogits} = \dfrac{1}{- (1 + \log \probs - \oldlogits)^T \dfrac{\partial \probs}{\partial\eta^*}}  (1 + \log  \probs - \oldlogits)^T \dfrac{\partial \probs}{\partial \tarlogits} 
\end{align*}
The required partial derivatives can be obtained from \autoref{eq:primal_as_softmax} as
\begin{align*}
    \dfrac{ \partial \probs }{\partial \tarlogits}  = \frac{1}{\eta^* + 1} (\textrm{D}(\probs) - \probs\probs^T) \quad \text{and} \quad \dfrac{\partial \probs}{\partial \eta^*} =  \frac{1}{(\eta + 1)^2}  (\textrm{D}(\probs) - \probs\probs^T)(\oldlogits - \tarlogits),
\end{align*}
where $D(q)$ denotes a diagonal matrix with the entries of $\probs$ on the diagonal.

Crucially, for practical purposes, we never need to explicitly materialize the matrices in the partial derivatives.
The resulting backward introduces negligible computational and memory overhead and.
\autoref{lst:dual} shows that, in the non-sparsified case, the backward can be written in less than ten lines of python code.

\subsection{Sparsification}
\label{app_ssec:sparsification_theorems}

\begin{theorem}\label{thm:greater_logit}
    For any pair of logits $o_t^{(1)}$ and $o_t^{(2)}$, with $\tilde{\pi}(o_t^{(1)} \, | \, \bm{q}, \bm{o}_{<t}) \ge \tilde{\pi}(o_t^{(2)} \, | \, \bm{q}, \bm{o}_{<t})$ w.l.o.g., the logit-wise terms that sum to the KL are equally ordered
    \begin{equation}
        \tilde{\pi}(o_t^{(1)} \, | \, \bm{q}, \bm{o}_{<t} ) \log \frac{ \tilde{\pi}(o_t^{(1)} \, | \, \bm{q}, \bm{o}_{<t}) }{ \old{\pi}(o_t^{(1)} \, | \, \bm{q}, \bm{o}_{<t}) } \quad\ge\quad \tilde{\pi}(o_t^{(2)} \, | \, \bm{q}, \bm{o}_{<t}) \log \frac{ \tilde{\pi}(o_t^{(2)} \, | \, \bm{q}, \bm{o}_{<t}) }{ \old{\pi}(o_t^{(2)} \, | \, \bm{q}, \bm{o}_{<t}) }
    \end{equation}
    iff $e^\kappa \ge \gamma$, where $\kappa = \frac{ \tilde{\pi}(o_t^{(1)} \, | \, \bm{q}, \bm{o}_{<t} ) }{ \tilde{\pi}(o_t^{(2)} \, | \, \bm{q}, \bm{o}_{<t} ) }$ is the current probability ratio of the pair and $\gamma$ in $\frac{ \old{\pi}(o_t^{(1)} \, | \, \bm{q}, \bm{o}_{<t}) }{ \old{\pi}(o_t^{(2)} \, | \, \bm{q}, \bm{o}_{<t}) } = \gamma \kappa $ gives the multiplier of the old ratio. 
\end{theorem}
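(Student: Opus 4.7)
The plan is to collapse the comparison between the two logit-wise KL summands onto a single scalar inequality by expressing both quantities relative to the second token. Writing $a_i \coloneqq \tilde{\pi}(o_t^{(i)}\mid \bm{q}, \bm{o}_{<t})$ and $b_i \coloneqq \old{\pi}(o_t^{(i)}\mid \bm{q}, \bm{o}_{<t})$, the hypotheses give $a_1 = \kappa\, a_2$ and $b_1 = \gamma\kappa\, b_2$, so the inner ratio simplifies:
\begin{equation*}
\log\frac{a_1}{b_1} \;=\; \log\frac{\kappa\, a_2}{\gamma\kappa\, b_2} \;=\; \log\frac{a_2}{b_2} - \log \gamma.
\end{equation*}

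Substituting into the difference $a_1 \log(a_1/b_1) - a_2 \log(a_2/b_2)$ and collecting like terms, I would factor out the positive weight $a_2$ to obtain
\begin{equation*}
a_1 \log\tfrac{a_1}{b_1} \;-\; a_2 \log\tfrac{a_2}{b_2} \;=\; a_2 \Bigl[ (\kappa - 1) \log\tfrac{a_2}{b_2} \;-\; \kappa \log \gamma \Bigr].
\end{equation*}
The claim then reduces to showing that the bracketed expression is nonnegative iff $e^{\kappa} \ge \gamma$. I would treat the boundary $\kappa = 1$ separately (where the bracket collapses to $-\log \gamma$ and the claim is trivially compatible with $\gamma \le e$) and handle $\kappa > 1$ by dividing by the positive factor $\kappa - 1$, then exponentiating while carefully tracking the sign of $\log(a_2/b_2)$ so that the direction of the equivalence is preserved.

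The main obstacle is bridging the algebraic condition $(\kappa-1)\log(a_2/b_2) \ge \kappa \log \gamma$ to the compact form $e^{\kappa} \ge \gamma$ stated in the theorem, since the former explicitly involves the reference ratio $a_2/b_2$ while the latter does not. I expect this to hinge on the regime captured by the hypothesis $a_1 \ge a_2$: this constrains how far $a_2/b_2$ can deviate in the direction that would otherwise flip the inequality, and lets the dependence on $\log(a_2/b_2)$ be absorbed into the exponential condition on $\kappa$ and $\gamma$. Once this absorption is settled, the bi-implication follows, and the ordering of the logit-wise KL contributions asserted in the theorem is an immediate consequence, which in turn justifies the greedy top-$K$ logit selection used in the sparsification scheme of Section~3.2.
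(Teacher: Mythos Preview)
Your substitution $a_1=\kappa a_2$, $b_1=\gamma\kappa b_2$ and the resulting factorisation
\[
a_1\log\tfrac{a_1}{b_1}-a_2\log\tfrac{a_2}{b_2}=a_2\bigl[(\kappa-1)\log\tfrac{a_2}{b_2}-\kappa\log\gamma\bigr]
\]
are correct and in fact cleaner than what the paper writes down. The paper proceeds in the same spirit---divide through by $p(x_2)$ and try to isolate a condition on $\kappa$ and $\gamma$---but its second displayed line, ``$\kappa\log k\ge\log\tfrac{q(x_1)}{q(x_2)}$'', silently discards exactly the $(\kappa-1)\log(p(x_2)/q(x_2))$ term that you correctly retained, and the subsequent passage to $e^\kappa\kappa\ge\gamma\kappa$ is a further slip (exponentiating $\kappa\log\kappa$ gives $\kappa^\kappa$, not $e^\kappa\kappa$). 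So the obstacle you flag is not a detail to be finessed; it is the point at which the paper's derivation breaks.

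Where your proposal goes wrong is in the final paragraph: the hope that the hypothesis $a_1\ge a_2$ lets the dependence on $a_2/b_2$ be ``absorbed'' into $e^\kappa\ge\gamma$ is unfounded. That hypothesis only says $\kappa\ge1$; it imposes no constraint whatsoever on $a_2/b_2$. A concrete counterexample: take $\kappa=2$, $\gamma=1$, and any $a_2<b_2$. Then $e^\kappa=e^2>1=\gamma$ holds, yet the bracket equals $(\kappa-1)\log(a_2/b_2)<0$, so the ordering of the KL summands is reversed. Hence the biconditional as stated cannot be established, by your route or any other, without an additional assumption tying $a_2/b_2$ down (for instance $p(x_2)=q(x_2)$, or replacing $e^\kappa\ge\gamma$ by a condition that retains the reference ratio). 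You should not try to close this gap; instead, note that the stated equivalence requires a side condition, and that the intended qualitative takeaway---larger-probability tokens tend to carry larger KL contributions when the old and new distributions are not drastically reordered---survives once that is made explicit.
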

\begin{proof}\label{proof:greater_logit}
    Rewrite \( \tilde{\pi}(o_t^{(1)} \, | \, \bm{q}, \bm{o}_{<t}) \ge \tilde{\pi}(o_t^{(2)} \, | \, \bm{q}, \bm{o}_{<t}) \) as  \(p(x_1) = \kappa \cdot p(x_2) \) for $\kappa \ge 1$ using $p(x_i) = \tilde{\pi}(o_t^{(i)} \, | \, \bm{q}, \bm{o}_{<t})$ for clarity and similarly replace $q(x_i) = \old{\pi}(o_t^{(i)} \, | \, \bm{q}, \bm{o}_{<t})$.
    Then compare the contributions of $x_1$ and $x_2$ to the KL divergence
    \begin{align*}
        \kappa p(x_2) \log \frac{\kappa p(x_2)}{q(x_1)} &\ge p(x_2) \log \frac{p(x_2)}{q(x_2)}\\
        \kappa \log k &\ge \log \frac{q(x_1)}{q(x_2)} 
        \intertext{and substitute \( \frac{q(x_1)}{q(x_2)} \eqcolon \gamma \frac{p(x_1)}{p(x_2)} = \gamma \kappa\)} 
        e^\kappa \kappa &\ge \gamma \frac{p(x_1)}{p(x_2)} \\
        e^\kappa &\ge \gamma.
    \end{align*}
\end{proof}
Here, the assumption that the relative likelihood $\kappa$ of $o_t^{(1)}$ and $o_t^{(2)}$ was not exponentially larger before usually holds in practice, as the token distributions are pushed farther from uniform during training~\citep{cui2025entropy}.

\begin{definition}\label{def:sparsification}
    For any subset $\mathcal{S}$ of the possible tokens, we define $p_{\mathcal{S}}$, or just $p'$ when the mask is clear, as the \emph{sparsed} distribution.
    For tokens not in $\mathcal{S}$, it has default probability $p_d$ and the same probability as $p$ for all others up to the renormalization constant.
    \begin{equation}\label{app_eq:sparse_default_def}
        p_{ \mathcal{S}}(x) = p'(x) \coloneq \begin{cases}
            \gamma p(x),  &\mathrm{for}~x \in \mathcal{S} \\
            p_d ,  &\mathrm{else} 
        \end{cases},
        \qquad
        \gamma = \frac{1 - (|\mathcal{V}| - |\mathcal{S}|) \cdot p_d}{\sum_{x \in \mathcal{S}} p(x)}.
    \end{equation}
    The renormalization factor $\gamma$ accounts for the previous total mass $\sum_{x \in \mathcal{S}} p(x)$ of the kept tokens and new mass $(|\mathcal{V}| - |\mathcal{S}|) \cdot p_d$ of the dropped tokens.
\end{definition}

In the case of equal sparsification masks for distributions $p, q$, we can prove a practically tight upper bound for the true divergence $\KL{p}{q}$ in terms of the sparse divergence $\KL{p'}{q'}$.
\begin{theorem}\label{thm:topk_bound}
    Let $p,q$ be categorical distributions over the vocabulary $|\mathcal{V}|$ with identical top-$k$ logits, $ \operatorname{topk(p)} = \operatorname{topk(q)}$ and equal total probability $\sum_{x \in \operatorname{topk(p)}} p(x) = \sum_{x \in \operatorname{topk(q)}} q(x) = 1-\delta$.
    Then the sparsed distributions $p', q'$ with density 
    \begin{equation*}
        p'(x) \coloneq \begin{cases}
            \gamma p(x),  &\mathrm{for}~x \in \operatorname{topk(p)} \\
            p_d ,  &\mathrm{else}
        \end{cases}, \quad
        q'(x_i) \coloneq \begin{cases}
            \gamma q(x),  &\mathrm{for}~x \in \operatorname{topk(q)} \\
            p_d ,  &\mathrm{else}
        \end{cases},
    \end{equation*}
    and normalization constant $\gamma(\delta, k, |\mathcal{V}|, p_d) \approx 1$ follow the inequality
    \begin{equation*}
        \KL{p}{q} \le \gamma^{-1} \KL{p'}{q'} + \delta \log \frac{ \delta }{ q_{\min} },
    \end{equation*}
    where $q_{\min} = \arg \min_x q(x)$.
\end{theorem}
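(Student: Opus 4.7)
The plan is to split the KL divergence over the two natural regions, namely the shared top-$k$ support $\mathcal{S}$ and its complement $\mathcal{S}^c$, and bound each separately. Concretely, I would write
\begin{equation*}
\KL{p}{q} \;=\; \underbrace{\sum_{x \in \mathcal{S}} p(x) \log \tfrac{p(x)}{q(x)}}_{T_{\text{head}}} \;+\; \underbrace{\sum_{x \in \mathcal{S}^c} p(x) \log \tfrac{p(x)}{q(x)}}_{T_{\text{tail}}},
\end{equation*}
and show $T_{\text{head}} = \gamma^{-1}\KL{p'}{q'}$ and $T_{\text{tail}} \le \delta \log(\delta / q_{\min})$.

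For the head term, I would simply substitute $p(x) = \gamma^{-1} p'(x)$ and $q(x) = \gamma^{-1} q'(x)$ for $x \in \mathcal{S}$ (valid by Definition~\ref{def:sparsification}, using the hypothesis that the two top-$k$ sets coincide and carry the same total mass $1-\delta$, which forces both sparsifications to share the same $\gamma$). The two $\gamma^{-1}$ factors inside the log cancel, pulling $\gamma^{-1}$ out front, giving $T_{\text{head}} = \gamma^{-1}\sum_{x\in\mathcal{S}} p'(x) \log \tfrac{p'(x)}{q'(x)}$. Then I would observe that the tail contribution to $\KL{p'}{q'}$ vanishes, because by construction $p'(x) = q'(x) = p_d$ for $x \in \mathcal{S}^c$, so $\sum_{x\in\mathcal{S}} p'(x) \log \tfrac{p'(x)}{q'(x)} = \KL{p'}{q'}$ exactly, yielding $T_{\text{head}} = \gamma^{-1}\KL{p'}{q'}$.

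The tail term is the only place any inequality enters. I would bound it in two independent steps. First, since $q(x) \ge q_{\min}$ for every $x$, we have $-\log q(x) \le -\log q_{\min}$, so $-\sum_{x \in \mathcal{S}^c} p(x)\log q(x) \le \delta \log(1/q_{\min})$, using $\sum_{x\in\mathcal{S}^c} p(x) = \delta$. Second, for $\sum_{x \in \mathcal{S}^c} p(x)\log p(x)$, I would use convexity of $t \mapsto t \log t$: on the simplex $\{(p(x))_{x\in\mathcal{S}^c}: p(x)\ge 0,\ \sum p(x) = \delta\}$, a convex sum is maximized at a vertex, i.e., a single $p(x)=\delta$ with the rest zero, giving $\sum_{x\in\mathcal{S}^c} p(x)\log p(x) \le \delta \log \delta$. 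Adding these two bounds gives $T_{\text{tail}} \le \delta\log(\delta/q_{\min})$, and combining with the head identity finishes the proof.

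The main conceptual obstacle is ensuring the renormalization is symmetric between $p'$ and $q'$ — this is exactly why the hypothesis that the top-$k$ supports \emph{and} their total masses coincide is needed, otherwise the two $\gamma$ factors would not cancel inside the log in the head term and the clean factor $\gamma^{-1}$ would not appear. The tail bound, while the only inequality step, is essentially tight in the worst case (all of $\delta$ concentrated on the coordinate of smallest $q$), so there is no slack to recover if the normalization argument fails.
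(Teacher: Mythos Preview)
Your proof is correct and follows essentially the same head/tail decomposition as the paper, including the identical treatment of the head term via the $\gamma$-cancellation and the vanishing tail of $\KL{p'}{q'}$. The only minor variation is in bounding $\sum_{x\in\mathcal{S}^c} p(x)\log p(x)$: the paper uses the pointwise observation $p(x_i)\le\delta$ for each tail element (so $p(x_i)\log p(x_i)\le p(x_i)\log\delta$, then sum) rather than your convexity/vertex argument, but both yield $\delta\log\delta$ and the remainder of the tail bound via $q(x)\ge q_{\min}$ is identical.
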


\begin{proof}
Rename the logits in descending order of probability under p, such that \(p(x_0) \ge p(x_1) \ge \cdots \ge p(x_{|\mathcal{V}|-1})\).
Assume there is $k < |\mathcal{V}|$ such that the largest $k$ logits of both $p$ and $q$ have exactly the total probability mass $\sum_{i=0}^{k} p(x_i) = \sum_{i=0}^{k-1} q(x_i) = 1-\delta$ and the subset of largest logits is identical.
Every nondegenerate distribution has $q_{\min} \le q(x_i)$ and all $p(x_i) \le \delta$ for $i \ge k$, as the total mass could otherwise not be $1-\delta$.
So split the sum over logits in the KL divergence and apply both inequalities
\begin{align*}
    \KL{p}{q} &= \sum_{i=0}^{k-1} p(x_i) \log \frac{ p(x_i) }{ q(x_i) } + \sum_{i=k}^{|\mathcal{V}|-1}  p(x_i) \log \frac{ p(x_i) }{ q(x_i) } \\
    &\le \sum_{i=0}^{k-1} p(x_i) \log \frac{ p(x_i) }{ q(x_i) } + \sum_{i=k}^{|\mathcal{V}|-1}  p(x_i) \log \frac{ \delta }{ q(x_i) } \\
    &\le \sum_{i=0}^{k-1} p(x_i) \log \frac{ p(x_i) }{ q(x_i) } + \sum_{i=k}^{|\mathcal{V}|-1}  p(x_i) \log \frac{ \delta }{ q_{\min} } \\
    &= \sum_{i=0}^{k-1} p(x_i) \log \frac{ p(x_i) }{ q(x_i) } + \log \frac{ \delta }{ q_{\min} } \underbrace{\sum_{i=k}^{|\mathcal{V}|-1}  p(x_i)}_{= \delta} \\
    &= \sum_{i=0}^{k-1} p(x_i) \log \frac{ p(x_i) }{ q(x_i) } + \delta \log \frac{ \delta }{ q_{\min} } .
\end{align*}
Replace p, and analogously q, with their sparsed version as defined in \autoref{def:sparsification},
\begin{equation}
    p'(x_i) \coloneq \begin{cases}
        \gamma p(x_i),  &\mathrm{for}~i < k \\
        p_d ,  &\mathrm{for}~i \ge k 
    \end{cases},
\end{equation}
where \(\gamma = \frac{1 - (|\mathcal{V}| - k) \cdot p_d}{(1 - \delta)} \) renormalizes the $(1-\delta)$ mass of the selected tokens to account for the default mass $(|\mathcal{V}| - k) \cdot p_d$ of the sparsified tokens.
Multiplying with ones and adding a zero to the KL bound yields the relation to the sparse KL, 
\begin{align*}
    \KL{p}{q} &\le \sum_{i=0}^{k-1} \frac{\gamma}{\gamma} p(x_i) \log \frac{ \gamma p(x_i) }{ \gamma q(x_i) } + \delta \log \frac{ \delta }{ q_{\min} } + \gamma^{-1} \sum_{i=k}^{|\mathcal{V}|-1}  p'(x_i) \underbrace{\log \frac{ p_d }{ p_d }}_{=0} \\
    &= \gamma^{-1} \sum_{i=0}^{k-1} p'(x_i) \log \frac{ p'(x_i) }{ q'(x_i) } + \delta \log \frac{ \delta }{ q_{\min} } +  \gamma^{-1} \sum_{i=k}^{|\mathcal{V}|-1}  p'(x_i) \log \frac{ p'(x_i) }{ q'(x_i) } \\
    &= \gamma^{-1} \KL{p'}{q'} + \delta \log \frac{ \delta }{ q_{\min} }.     
\end{align*}
Assuming that $q$'s probabilities can be represented by normal single precision IEEE-754 numbers, $q_{\min} > 1.17549 \cdot  10^{-38}$, and $k \ll |\mathcal{V}|$, e.g. $k=256$ of vocab size $|\mathcal{V}| = 151936$ while using threshold $\delta = 10^{-5}$ and default mass $p_d = 10^{-12}$, the sparse KL approximation,
\begin{align*}
    \KL{p}{q} &\le \frac{(1 - \delta)}{1 - (|\mathcal{V}| - k)\cdot p_d} \KL{p'}{q'} + \delta \log \frac{ \delta }{ q_{\min} } \\
    &= \frac{0.99999}{1 -151680 \cdot 10^{-12}} \KL{p'}{q'} +  10^{-5} \log \frac{ 10^{-5} }{ 1.17549 \cdot  10^{-38} } \\
    &\le 0.99999015168 \cdot \KL{p'}{q'} + 0.00075823623 ,
\end{align*}
is accurate enough for limiting the true divergence to values on the order of $0.05$ as
\begin{align*}
    \KL{p}{q} &\le 0.99999015168 \cdot \KL{p'}{q'} + 0.00075823623 \\
    &\le 0.99999015168 \cdot 0.05 + 0.00075823623 \\
    \KL{p}{q} &\le 0.050757743814.
\end{align*}
\end{proof}

\section{Implementation}\label{app_sec:code}

\begin{figure}[t]
    \centering
    \begin{tikzpicture}[ultra thick]
	
	\draw[rounded corners] (0, 2) rectangle (2.5, 3) node[pos=.5, align=center]{$\tarpol$};
	\draw[rounded corners] (0, 0) rectangle (2.5, 1) node[pos=.5, align=center]{$\oldpol$}; 

    \draw[rounded corners] (4.5, 2) rectangle (5.5, 3) node[pos=.5, align=center]{$\eta^*$}; 

    \draw[rounded corners] (7.5, 2) rectangle (10, 3) node[pos=.5, align=center]{$\pol$};
    
    \draw[rounded corners] (12, 2) rectangle (14, 3) node[pos=.5, align=center]{$\mathcal{J}_\textrm{
    Troll}(\theta)$ \\ \autoref{eq:troll_obj}}; 
    \draw[rounded corners] (12, 0.25) rectangle (14, 1.25) node[pos=.5, align=center]{Ratio Term}; 
    \draw[rounded corners] (12, 3.75) rectangle (14, 4.75) node[pos=.5, align=center]{KL Term};

    \draw[->, >=stealth, red] (2.5, 2.5) -- (4.5, 2.5) node[midway, above, black]{\autoref{eq:final_dual}};
    \draw[->, >=stealth] (5.5, 2.5) -- (7.5, 2.5) node[midway, above, black]{\autoref{eq:primal}};
    \draw[->, >=stealth, dashed] (10, 2.75) -- (11, 2.75) -- (11, 4) -- (12,4); %
    \draw[->, >=stealth] (10, 2.25) -- (11, 2.25) -- (11, 1) -- (12,1);

    \draw[->, >=stealth] (13, 3.75) -- (13,3); 
    \draw[->, >=stealth] (13, 1.25) -- (13,2); 

    \draw[->, >=stealth, rounded corners, dashed] (2.5, 0.5) -- (5, 0.5) -- (5, 2);
    \draw[->, >=stealth, rounded corners, dashed] (2.5, 0.5) -- (12, 0.5);

    \draw[->, >=stealth, rounded corners, dashed] (2.5, 0.5) -- (8.75, 0.5) -- (8.75, 2);
    
    \draw[->, >=stealth, rounded corners] (1.25, 3) -- (1.25, 4.5) -- (8.75, 4.5) -- (8.75, 3);
    \draw[->, >=stealth, rounded corners] (1.25, 3) -- (1.25, 4.5) -- (12, 4.5);
    
    \draw[red] (2.5, -1) -- (4.5, -1);
    \draw[dashed] (10.5, -1) -- (12.5, -1);
    \draw (6.5, -1) -- (8.5, -1);
    \draw[] (3.5, -0.5) -- (3.5, -0.5) node[align=center]{Numerical Optimization};
    \draw[] (7.5, -0.5) -- (7.5, -0.5) node[align=center]{Analytical Computation};
    \draw[] (11.5, -0.5) -- (11.5, -0.5) node[align=center]{No Gradient};
\end{tikzpicture}
    \caption{
    \revision{Compute Graph from the LLM output $\tarpol$ to the RL objective $\mathcal{J}_\text{Troll}(\theta)$. First, the optimal step size $\eta^*$ is computed for each token. Here, we use the fact that the step size for tokens that do not violate the trust region is trivially $0$. For the tokens that do violate the trust region, we need to optimize a $1$-D convex optimization problem to compute $\eta^*$.
    Next, the projection computes the optimal distribution within the trust region, $\pol$, which is then used in the objective.
    This objective combines the standard policy ratio term from \gls{ppo} with a KL term to regress $\tarpol$ towards $\pol$. 
    }
    }
    \label{fig:flow_chart}
\end{figure}

While the theoretical derivation of the differentiable trust region projection can look complex, the final implementation is fairly straightforward.
We give PyTorch-adjacent pseudocode for the dense variant of the primal (\autoref{lst:primal}) and dual (\autoref{lst:dual}) in the following.
Note that the sparse implementation mostly differs in the usage of a custom sparse tensor class that maintains a default probability for the implicit entries.
While this requires additional care in terms of indexing and allows for optimizations of, e.g., the \gls{kl} computation, the general logic remains unchanged.
\autoref{lst:brack}, shows our $n$-ary bracketing method to optimize the dual. 

\begin{lstlisting}[float, label={lst:primal}, language=Python, caption=\gls{troll}'s differentiable projection only calls a differentiable dual solver and otherwise uses standard autodiff operations.]
def TROLLProjection(log_target_prob, log_ref_prob, bound):
    kl_div = (log_target_prob.exp() * (log_target_prob - log_ref_prob)).sum(dim=-1)
    needs_projection = kl_div >= bound # only projects where necessary
    # ... masking of needed tokens
    # solve dual problem, i.e. find $\eta^*$
    opt_eta = DualSolver(log_target_prob, log_ref_prob, bound)
    primal_unnormalized = (opt_eta * log_ref_prob + log_target_prob) / (opt_eta + 1)
    primal = inner.log_softmax(dim=-1) 
    # ... combine masked unprojected and primal logits into one
    return projected_logits
\end{lstlisting}

\begin{lstlisting}[float, label={lst:dual}, language=Python, caption=Custom forward and backward code for \gls{troll}'s dual solver.]
def DualSolver.forward(log_target_prob, log_ref_prob, bound):
    # define objective in terms of log eta (such that eta > 0)
    opt_log_eta = optimize1d(
            lambda log_eta: dual(log_eta, ...),
            # ... bounds and termination config
        )
    # ... save for backward
    return opt_log_eta.exp()

def dual(log_eta, bound, log_target_prob, log_ref_prob)
    eta = log_eta.exp()
    inner = (log_target_prob + eta * log_ref_prob) / (eta + 1)
    inner_lse = logsumexp(inner, axis=-1)
    # negative of objective, since we minimize
    return eta *  bound + (eta + 1) * inner_lse

def DualSolver.backward(grad_output):
    # ... recompute primal = ... as in TROLLProjection
    one_plus_logratio = 1 + primal.log() - log_ref_prob
    # compute one_plus_logratio.T @ dprimal_dlog_output implicitly
    numerator = primal * (one_plus_logratio - vecdot(primal, one_plus_logratio).unsqueeze(-1) / (opt_eta + 1)
    # compute  dprimal_dopt_eta implicitly
    diff = log_ref_prob - log_target_prob
    dprimal_dopt_eta = primal * (diff - vecdot(primal, diff).unsqueeze(-1) / (opt_eta + 1)**2
    return grad_output * (numerator / -vecdot(one_plus_logratio, dprimal_dopt_eta))
\end{lstlisting}

\begin{lstlisting}[float, language=Python, label={lst:brack}, caption=N-ary Bracketing Search.]

class Optimizer1D:
      
    def batched_linspace(lower, upper, num_points):  
        # Batched linspace: lower and upper are (batch_size, 1), returns (batch_size, num_points)

        steps = linspace(0, 1, num_points)  
        return lower + (upper - lower) * steps

    def _opt_step(func, x, lower, upper):
        batch_size, num_points = x.shape
        # batched evaluation of all points
        y = func(x)
        # select min index for each batch element
        min_idx = argmin(y, dim=1)

        # take left and right point
        l_idx = min_idx - 1
        u_idx = min_idx + 1
        l_tmp = x[arange(batch_size), clamp(l_idx, 0, num_points - 1)]
        u_tmp = x[arange(batch_size), clamp(u_idx, 0, num_points - 1)]
        new_lower = where(l_idx < 0, lower), l_tmp)
        new_upper = where(u_idx >= num_points, upper, u_tmp)
        return new_lower, new_upper

    def optimize(func, lower, upper, num_points, max_steps, x_threshold):
        # batched, parallel, gradient-free, optimization of a 1D function 

        l, u = lower, upper
        # refine lower and upper until convergence
        for step in range(max_steps):
            x = Optimizer1D.batched_linspace(l, u, num_points + 2)
            x = x[:, 1:-1]

            l, u = Optimizer1D._opt_step(func, x, l, u)

            if ((l - u) < x_threshold).abs().all():
                break

        x = (l + u) / 2

        return x
\end{lstlisting}

\section{Experimental Setup}
\label{app_sec:setup}
\subsection{Models}

\begin{table}[t]
\centering
\begin{tabular}{lc}
\textbf{Model} & \textbf{Link} \\
\toprule
Qwen3-0.6B & \smolurl{https://huggingface.co/Qwen/Qwen3-0.6B} \\
Qwen3-1.7B & \smolurl{https://huggingface.co/Qwen/Qwen3-1.7B} \\
Qwen3-4B   & \smolurl{https://huggingface.co/Qwen/Qwen3-4B} \\
Qwen3-8B   & \smolurl{https://huggingface.co/Qwen/Qwen3-8B} \\
Qwen3-14B  & \smolurl{https://huggingface.co/Qwen/Qwen3-14B} \\
\midrule
Qwen2.5-0.5B-Instruct & \smolurl{https://huggingface.co/Qwen/Qwen2.5-0.5B-Instruct} \\
Qwen2.5-1.5B-Instruct & \smolurl{https://huggingface.co/Qwen/Qwen2.5-1.5B-Instruct} \\
Qwen2.5-3B-Instruct   & \smolurl{https://huggingface.co/Qwen/Qwen2.5-3B-Instruct} \\
Qwen2.5-7B-Instruct   & \smolurl{https://huggingface.co/Qwen/Qwen2.5-7B-Instruct} \\
\midrule
Llama-3.1.8B & \smolurl{https://huggingface.co/meta-llama/Llama-3.1-8B} \\
Llama-3.1.8B-Instruct  &  \smolurl{https://huggingface.co/meta-llama/Llama-3.1-8B-Instruct} \\ 
\midrule
Llama-3.2-3B  & \smolurl{https://huggingface.co/meta-llama/Llama-3.2-3B} \\ 
LLama-3.2-3B-Instruct  & \smolurl{https://huggingface.co/meta-llama/Llama-3.2-3B-Instruct} \\ 
FineMath-Llama 3B  &  \smolurl{https://huggingface.co/HuggingFaceTB/FineMath-Llama-3B} \\ 
\midrule
Apertus-8B  & \smolurl{https://huggingface.co/swiss-ai/Apertus-8B-2509} \\ 
Apertus-8B-Instruct  & \smolurl{https://huggingface.co/swiss-ai/Apertus-8B-Instruct-2509}  \\ 
\midrule
SmolLM3-3B  & \smolurl{https://huggingface.co/HuggingFaceTB/SmolLM3-3B} \\ 
\bottomrule
\end{tabular}
\caption{Model checkpoints used as starting points for finetuning throughout this work.}
\label{tab:all_models_ref}
\end{table}

\autoref{tab:all_models_ref} lists all model checkpoints used in this work.
They are publicly available and can be downloaded under the provided links.

We used the thinking mode for the models from the Qwen3-Family. 
For the non-instruct versions of Llama-3.1, Llama-3.2, and Apertus, we used the chat templates from the respective instruct versions. 

\subsection{Datasets}
\label{app_ssec:datasets}
\revision{
For all math datasets, sequence-level binary rewards are computed by parsing the \gls{llm} output through a regular expression, matching against a ground truth answer. 
For code generation, we evaluate each answer using the provided test cases. The reward is then computed as the fraction of successful tests.
If the execution of a test case exceeds a timeout of ten seconds, the evaluation is terminated and zero reward is given.}

\textbf{DAPO-Math.}
We build \texttt{DAPO} Train and \texttt{DAPO} Eval on the version of the DAPO-Math dataset provided by~\citet{cui2025entropy}\footnote{Their original datasets can be downloaded under \url{https://github.com/PRIME-RL/Entropy-Mechanism-of-RL}.} 
From their training set, we set aside $1024$ samples as an in-domain validation set (\texttt{DAPO} Eval), leaving $16{,}893$ samples for \texttt{DAPO} Train. 
For broader out-of-distribution evaluation, we again follow \citet{cui2025entropy} and use a benchmark suite, we refer to as \texttt{Math}-Eval, consisting of MATH500~\citep{hendrycks2021measuring}, AMC, AIME2024~\citep{li2024numinamath}, AIME 2025, OMNI-MATH~\citep{gao2025omni}, OlympiadBench~\citep{he2024olympiadbench}, and Minerva~\citep{lewkowycz2022solving}.
We again build the data provided by \citet{cui2025entropy} and also follow their protocol by computing the mean over 32 responses for the small but hard AMC, AIME2024, and AIME2025 datasets while only considering a single response for the other sets. 

Finally, we ensure all $3$ datasets have the same system preprompt, which we provide in \autoref{lst:prepromt}, and include correct and identical instructions for answer formatting. 

\begin{lstlisting}[float, caption={System Prompt for \texttt{DAPO}-Train, \texttt{DAPO}-Eval, and \texttt{Math}-Eval},label={lst:prepromt}, numbers=none, breaklines=true,breakautoindent=false,breakindent=0pt]
Your task is to follow a systematic, thorough reasoning process before providing the final solution. This involves analyzing, summarizing, exploring, reassessing, and refining your thought process through multiple iterations. Structure your response into two sections: Thought and Solution. In the Thought section, present your reasoning using the format: "<think> {thoughts} </think>".
\end{lstlisting}

\textbf{GSM8K.}
We use the publicly available train and validation sets of the \texttt{GSM8K} Dataset~\citep{hendrycks2021measuring}\footnote{\url{https://huggingface.co/datasets/openai/gsm8k}} without further modifications.

\textbf{Eurus-Math.}
We use the publicly available train and validation sets of the \texttt{Eurus}-2-RL-Dataset~\citep{cui2025processreinforcementimplicitrewards}\footnote{\url{https://huggingface.co/datasets/PRIME-RL/Eurus-2-RL-Data}},\revision{which is a subset of NuminaMath-CoT~\citep{li2024numinamath}}.
We filter for math questions, resulting in $455\,261$ train and $1\,024$ evaluation questions, \revision{and refer to the resulting dataset as \texttt{Eurus-Math}}.

\revision{
\textbf{Eurus-Code.}
We use the same \texttt{Eurus}-2-RL-Dataset~\citep{cui2025processreinforcementimplicitrewards} %
for code generation by filtering for code questions, resulting in $25\,276$ train and $1\,024$ evaluation questions.
We call this subset \texttt{Eurus-Code}.
Each includes multiple test cases of inputs and expected outputs after running the parsed python code within a sandbox.
We use the PRIME reward manager of \texttt{verl}\footnote{\url{https://github.com/volcengine/verl}} which evaluates up to the first ten test cases in the \texttt{SandboxFusion} sandbox\footnote{\url{https://github.com/bytedance/SandboxFusion}}.

The dataset consist of tasks from APPS~\citep{hendrycks2021measuring}, CodeContests~\citep{doi:10.1126/science.abq1158}, TACO~\citep{li2023taco}  and Codeforces\footnote{\url{https://huggingface.co/datasets/MatrixStudio/Codeforces-Python-Submissions}}.
For the evaluation, we compute the pass@1\ scores for each of the four benchmarks and average the results.
The validation split has 142, 377, 382 and 123 predefined questions for APPS, CodeContests, TACO, and Codeforces, respectively.
The train data is split 13.7\%, 38.1\%, 37.9\% and 10.3\%, respectively, such that the evaluation overweights APPS and Codeforces.
Empirically, the models seem to perform better on Codeforces but worse on TACO, such that the evaluation success rates are slightly higher.
}

\subsection{Training Setup}
\label{ssec:hyperparams}
\begin{table}[t]
   \centering
   \begin{tabular}{lcc}
   \textbf{Hyperparameter} & \textbf{Variable} & \textbf{Value} \\
   \toprule    
   Trust Region Size & $\epsilon$ & 0.05 \\
   KL Regression Factor & $\alpha$ & 1.0 \\
   Sparsity Remaining Mass & $1{-}\delta$ & $0.99999$ \\ 
   Max. Sparse Tokens & $K$ & $64$ \\ 
   Chunk Size & & $1024$ \\
   \midrule
   Clip Value & $\epsilon_\text{ppo}$ & $0.2$ \\
   \midrule
    Learning Rate & & $10^{-6}$ \\
    Gradient Max Norm & & $1.0$ \\
    Weight Decay & & $0.0$ \\
    Learning Rate-Schedule & & constant \\ 
    Learning Rate Critic (\gls{ppo} only) & & $10^{-5}$ \\
    Weight Decay Critic (\gls{ppo} only) & & $0.01$ \\
   \midrule
   Sampler Per Query & & $8$ \\
   Batch Size & & $32$  \\ 
   Batches Per Step & & $8$ \\     
   \end{tabular}
   \caption{Hyperparameters. We use these parameters for all experiments unless mentioned otherwise.}
   \label{tab:hp}
\end{table}

We provide hyperparameters for our training setup in \autoref{tab:hp}.
We maintain consistent hyperparameters across all experiments, except for \autoref{app_ssec:ablations}, where we always vary exactly one parameter.

\subsection{Hardware}
\label{ssec:hardware}

We train on clusters with Nvidia A100, H100, and H200 nodes, each equipped with 4 GPUs. 
For the Qwen3-$14$B, Qwen3-$8$B and Qwen2.5-$7$B-Instruct experiments in \autoref{ssec:results_qwen}, we use H200s.
For all other experiments, we use either H100 or A100 nodes, depending on model size.
We train most experiments for up to $2$ days, and extend some experiments on \texttt{DAPO} to up to $4$ days to show algorithm convergence. 
We always train \textit{Clip} and \textit{\gls{troll}} on identical hardware to ensure a fair comparison.

\section{Additional Results}
\label{app_sec:additional_results}

\subsection[Qwen on DAPO]{Qwen on \texttt{DAPO}}
\label{app_ssec:qwen_results}

\begin{figure}[t]

    \includegraphics[width=\linewidth]{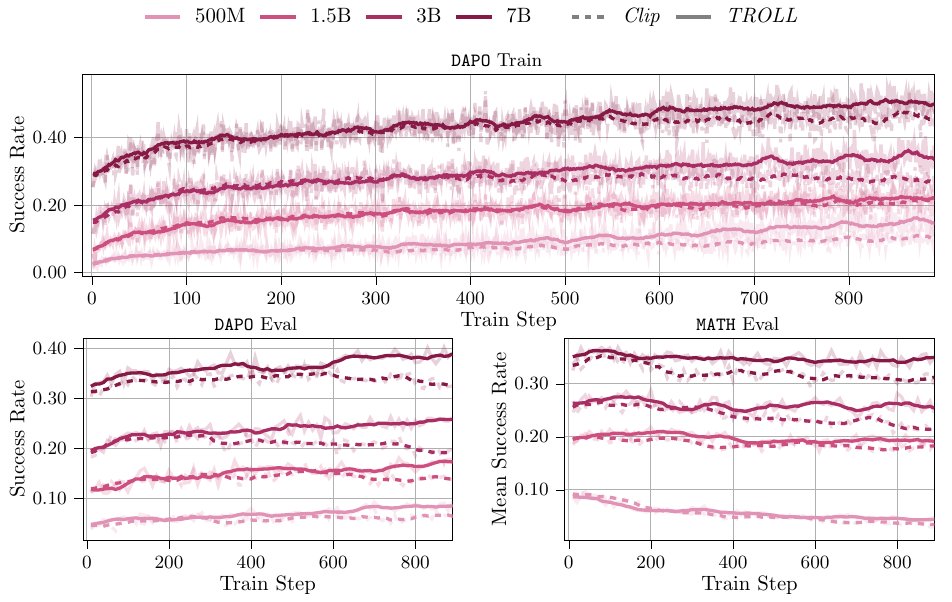}
    \caption{
        Performance of \textit{TROLL} and the \textit{Clip} objective across Qwen2.5-Instruct models with $500$M to $7$B parameters trained with GRPO on \texttt{DAPO}. 
        As in \autoref{fig:qwen3_size}, \textit{TROLL} yields more sample-efficient training and higher rewards at convergence. 
        These improvements extend both to evaluation on in-distribution questions and to generalization on out-of-distribution test datasets. 
        Smoothed values are shown in full opacity, with original curves in the background.  
    }
    \label{app_fig:qwen2_5_size}
\end{figure}

\autoref{app_fig:qwen2_5_size} extends the setup of the math evaluation of \autoref{fig:qwen3_size} to Qwen2.5-Instruct models.
Similarly to the Qwen3 results, \textit{\gls{troll}} consistently improves over the \textit{Clip} objective for each model size.
We further find that, generally, most Qwen2.5 models slightly overfit on the training data, although this effect is less pronounced for \textit{\gls{troll}}.

\autoref{app_fig:qwen3_methods} and \autoref{app_fig:qwen2_5_methods} show complete training and evaluation curves for \autoref{tab:qwen_methods}.
We find that \textit{\gls{troll}} improves training success rates over \textit{Clip} for both models and across methods, to the point where Qwen3 \gls{grpo} and \gls{drgrpo} start to slightly overfit on the out-of-distribution \texttt{MATH} evaluation.
Interestingly, while \textit{Clip} leads to unstable performance and eventual divergence for \gls{gspo} for both Qwen2.5 and Qwen3, \textit{\gls{troll}}'s token-level trust region optimization remains stable.

\begin{figure}[t]
    
    \includegraphics[width=\linewidth]{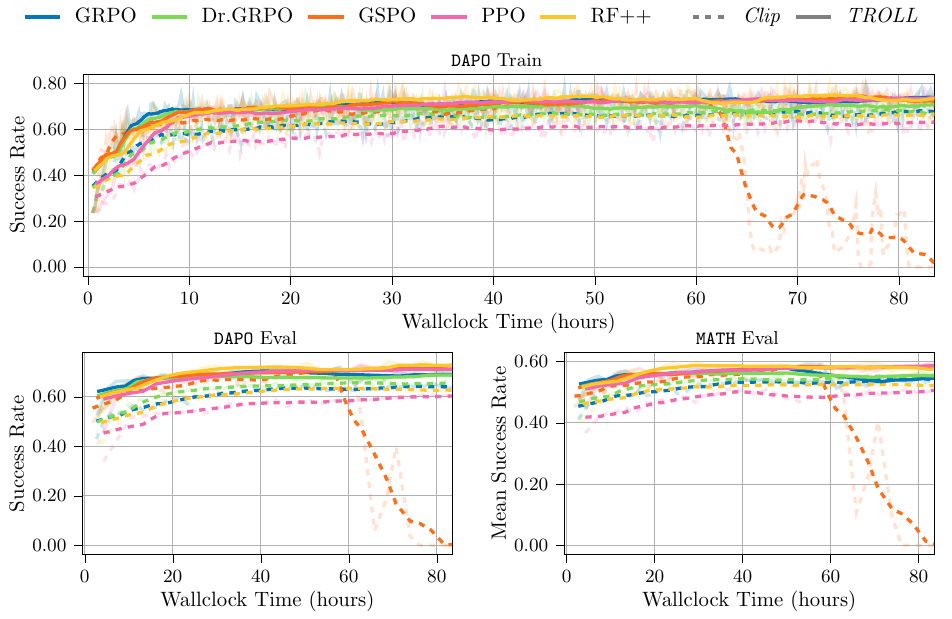}
    \caption{
        \textit{\gls{troll}} and \textit{Clip} success rates for Qwen3-$8$B trained with \gls{grpo}, \gls{drgrpo}, \gls{gspo}, \gls{ppo} \revision{and \gls{rfpp}} on training data (\textbf{top}), 
        in-domain evaluation (\textbf{bottom left}) and out-of-domain evaluation (\textbf{bottom right}).
        Smoothed values are shown in full opacity, with original curves in the background.  
        \textit{TROLL} improves over the \textit{Clip} objective for all methods. 
        For \gls{gspo}, \textit{Clip} eventually diverges, leading to $0.00\%$ success rate on all metrics, while \textit{TROLL}'s optimization stays stable. 
    }
    \label{app_fig:qwen3_methods}
\end{figure}

\begin{figure}[t]

    \includegraphics[width=\linewidth]{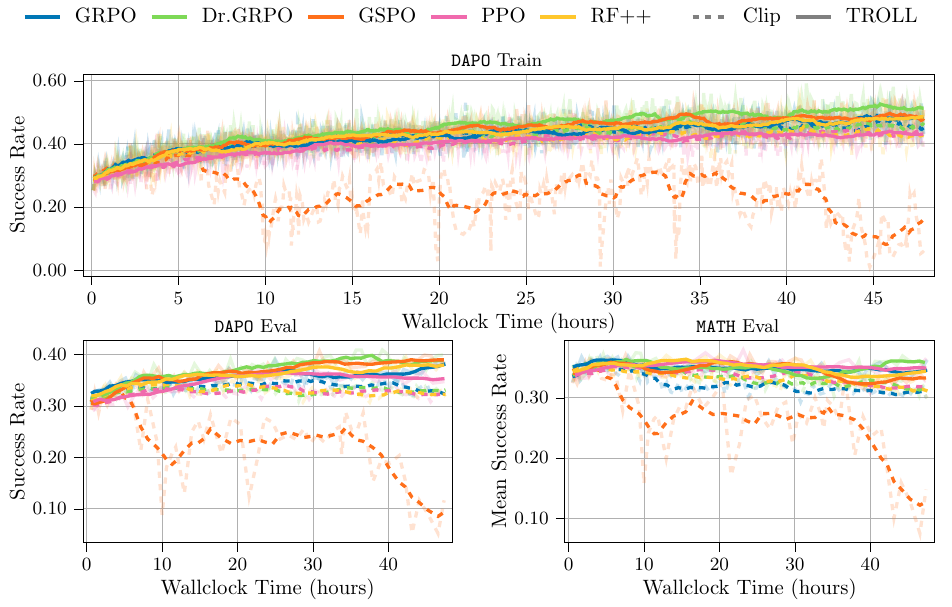}
    \caption{
        \textit{TROLL} and \textit{Clip} success rates across Qwen2.5-$7$B-Instruct models trained with \gls{grpo}, \gls{drgrpo}, \gls{gspo}, \gls{ppo} \revision{and \gls{rfpp}} on training data (\textbf{top}), 
        in-domain evaluation (\textbf{bottom left}) and out-of-domain evaluation (\textbf{bottom right}).
        Smoothed values are shown in full opacity, with original curves in the background.  
        \textit{TROLL} improves over the \textit{Clip} objective for all methods. 
        For \gls{gspo}, \textit{Clip} eventually diverges, while \textit{TROLL}'s optimization stays stable. 
    }
    \label{app_fig:qwen2_5_methods}
\end{figure}

\begin{table}
    \centering
    \small{
    \begin{tabular}{llccccccc}
    \toprule
    Method &  & AIME24 & AIME25 & AMC & MATH & Omni-Math & Olympiad & Minerva \\
    \midrule
    \multicolumn{9}{c}{Qwen2.5-$7$B-Instruct} \\
    \midrule
    GRPO   & \textit{Clip} & 0.066 & 0.075 & 0.535 & 0.683 & 0.239 & 0.286 & 0.304 \\
           & \textit{\gls{troll}} & 0.168 & 0.129 & 0.587 & 0.712 & 0.254 & 0.317 & 0.284 \\
    \midrule
    Dr.GRPO & \textit{Clip}  & 0.103 & 0.067 & 0.560 & 0.662 & 0.242 & 0.288 & 0.295 \\
           & \textit{\gls{troll}} & 0.168 & 0.135 & 0.605 & 0.706 & 0.259 & 0.320 & 0.317 \\
    \midrule
    PPO    & \textit{Clip}  & 0.092 & 0.064 & 0.503 & 0.706 & 0.251 & 0.316 & 0.299 \\
           & \textit{\gls{troll}} & 0.162 & 0.093 & 0.547 & 0.734 & 0.258 & 0.320 & 0.332 \\
    \midrule
    GSPO   & \textit{Clip}  & 0.026 & 0.002 & 0.188 & 0.344 & 0.106 & 0.102 & 0.120 \\
           & \textit{\gls{troll}} & 0.159 & 0.076 & 0.531 & 0.699 & 0.257 & 0.297 & 0.310 \\
    \midrule
    \revision{RF++} & \revision{\textit{Clip}} & \revision{0.096} & \revision{0.033} & \revision{0.556} & \revision{0.666} & \revision{0.239} & \revision{0.291} & \revision{0.299} \\
    & \revision{\textit{\gls{troll}}} & \revision{0.166} & \revision{0.102} & \revision{0.552} & \revision{0.687} & \revision{0.263} & \revision{0.320} & \revision{0.319} \\
    \midrule
    \multicolumn{9}{c}{Qwen3-$8$B} \\
    \midrule
    GRPO   & \textit{Clip}  & 0.439 & 0.293 & 0.720 & 0.889 & 0.465 & 0.547 & 0.431 \\
           & \textit{\gls{troll}} & 0.547 & 0.353 & 0.790 & 0.812 & 0.465 & 0.497 & 0.391 \\
    \midrule
    Dr.GRPO & \textit{Clip}  & 0.458 & 0.305 & 0.743 & 0.891 & 0.477 & 0.547 & 0.425 \\
           & \textit{\gls{troll}} & 0.447 & 0.337 & 0.769 & 0.880 & 0.466 & 0.522 & 0.403 \\
    \midrule
    PPO    & \textit{Clip}  & 0.380 & 0.234 & 0.694 & 0.874 & 0.439 & 0.531 & 0.405 \\
           & \textit{\gls{troll}} & 0.524 & 0.408 & 0.780 & 0.910 & 0.521 & 0.567 & 0.425 \\
    \midrule
    GSPO   & \textit{Clip}  & 0.000 & 0.000 & 0.000 & 0.000 & 0.000 & 0.000 & 0.000 \\
           & \textit{\gls{troll}} & 0.474 & 0.407 & 0.813 & 0.897 & 0.514 & 0.547 & 0.408 \\
    \midrule
    \revision{RF++} & \revision{\textit{Clip}} & \revision{0.356} & \revision{0.251} & \revision{0.728} & \revision{0.897} & \revision{0.459} & \revision{0.538} & \revision{0.415} \\
    & \revision{\textit{\gls{troll}}} & \revision{0.490} & \revision{0.355} & \revision{0.803} & \revision{0.891} & \revision{0.528} & \revision{0.563} & \revision{0.415} \\
    \bottomrule
    \end{tabular}
    }
    \caption{Success rates for individual \texttt{MATH} test datasets for Qwen2.5-$7$B-Instruct and Qwen3-$8$B models trained on \texttt{DAPO} with different advantage estimation methods.
    \textit{\gls{troll}} provides consistent benefits across methods and evaluation tasks, showing well-balanced improvements in performance.
    It also successfully trains \gls{gspo} without divergence, wheres \textit{Clip} eventually causes unstable updates, as shown in \autoref{app_fig:qwen3_methods} and \autoref{app_fig:qwen2_5_methods}.
    }
    \label{app_tab:qwen_methods_tasks}
\end{table}

\subsection{Additional Models}
\label{app_ssec:more_models}

\begin{figure}[t]
    \centering
    
    \includegraphics{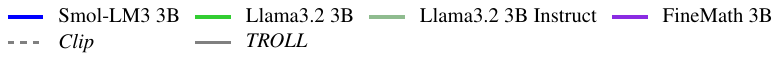} \\[0.5em]
    
    \begin{minipage}[t]{0.49\textwidth}
        \centering
        \includegraphics{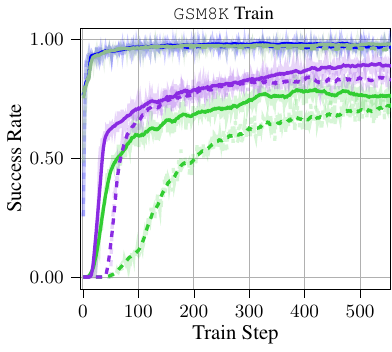}
    \end{minipage}
    \hfill
    \begin{minipage}[t]{0.49\textwidth}
        \centering
        \includegraphics{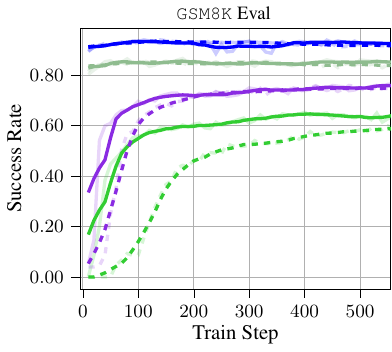}
    \end{minipage}
    \caption{
        \textit{\gls{troll}} and \textit{Clip} success rates for different $3$B models trained with GRPO on the \texttt{GSM8K} training data (\textbf{left}) and evaluated on the \texttt{GSM8K} test set (\textbf{right}).
        Smoothed values are shown in full opacity, with original curves in the background.  
        \textit{\gls{troll}} generally causes models to pick up a training signal more quickly, and exhibits more stable training behavior.
    }
    \label{app_fig:gsm8k_others_mid}
\end{figure}

\begin{figure}[t]
    \centering
    
    \includegraphics{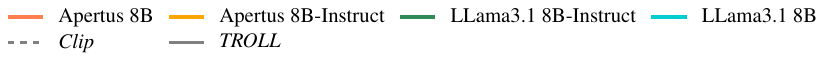} \\[0.5em]
    
    \begin{minipage}[t]{0.49\textwidth}
        \centering
        \includegraphics{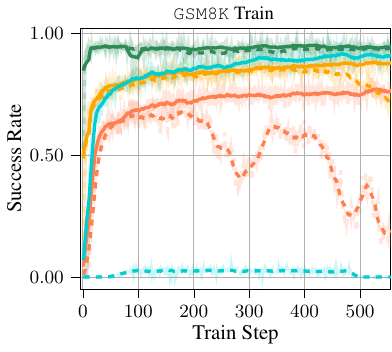}
    \end{minipage}
    \hfill
    \begin{minipage}[t]{0.49\textwidth}
        \centering
        \includegraphics{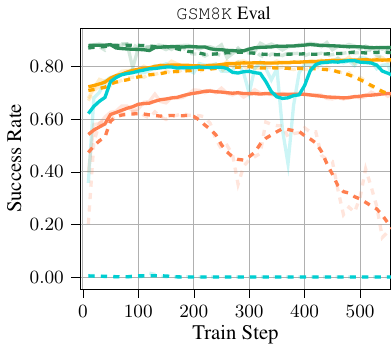}
    \end{minipage}
    \caption{
        \textit{\gls{troll}} and \textit{Clip} success rates for different $8$B models trained with GRPO on the \texttt{GSM8K} training data (\textbf{left}) and evaluated on the \texttt{GSM8K} test set (\textbf{right}).
        Smoothed values are shown in full opacity, with original curves in the background.  
        \textit{\gls{troll}} generally causes models to pick up a training signal more quickly, and exhibits more stable training behavior.
    }
    \label{app_fig:gsm8k_others_large}
\end{figure}

\autoref{app_fig:gsm8k_others_mid} and \autoref{app_fig:gsm8k_others_large} show success rates for different $3$B and $8$B models, respectively.
We find that \textit{\gls{troll}} causes some models, such as Finemath-$3$B, Llama3.2-$3$B and Llama3.1-$8$B to receive a training signal in significantly fewer steps.
Other models, such as Apertus-$8$B show more stable performance when trained with \textit{\gls{troll}}.
Finally, for models that work well with the \textit{Clip} objective, using \textit{\gls{troll}} generally yields some performance benefit even though the success rates on \texttt{GSM8K} are almost saturated.

\subsection[Qwen3 on Eurus-Math and GSM8K]{Qwen3 on \revision{\texttt{Eurus-Math}} and \texttt{GSM8K}}
\label{app_ssec:dataset_results}

We additionally evaluate different Qwen3 model sizes on \texttt{GSM8K} in \autoref{app_fig:gsm8k_qwen3}, finding that most models quickly saturate on this comparatively easy task.
Nevertheless, using \textit{\gls{troll}} instead of \textit{Clip} generally provides a small boost in performance across model sizes.
Similarly, \autoref{app_fig:eurus_qwen3_8b} shows that \textit{\gls{troll}} leads to improvements for Qwen3-$8$B trained with \gls{grpo} on \revision{\texttt{Eurus-Math}}.

\begin{figure}[t]
    \centering
    \includegraphics{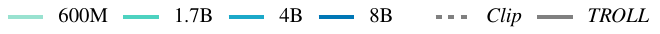} \\[0.5em]
    
    \begin{minipage}[t]{0.49\textwidth}
        \centering
        \includegraphics{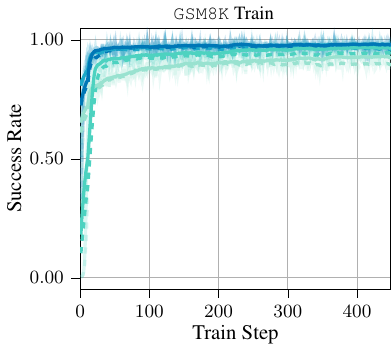}
    \end{minipage}
    \hfill
    \begin{minipage}[t]{0.49\textwidth}
        \centering
        \includegraphics{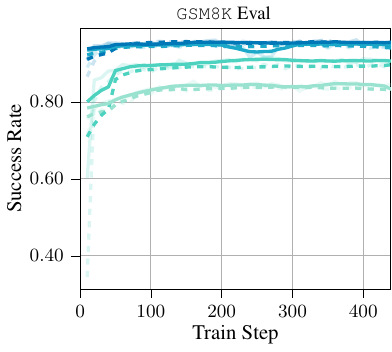}
    \end{minipage}
    \caption{
        \textit{TROLL} and \textit{Clip} success rates for Qwen3 models with $600$M to $8$B parameters trained with GRPO on the \texttt{GSM8K} training data (\textbf{left}) and evaluated on the \texttt{GSM8K} test set (\textbf{right}).
        Smoothed values are shown in full opacity, with original curves in the background.  
        Both \textit{\gls{troll}} and \textit{Clip} quickly converge in all cases, although \textit{\gls{troll}} achieves slightly higher performance for most model sizes.
    }
    \label{app_fig:gsm8k_qwen3}
\end{figure}

\begin{figure}[t]
    \centering
    
    \includegraphics{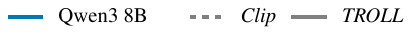} \\[0.5em]
    
    \begin{minipage}[t]{0.49\textwidth}
        \centering
        \includegraphics{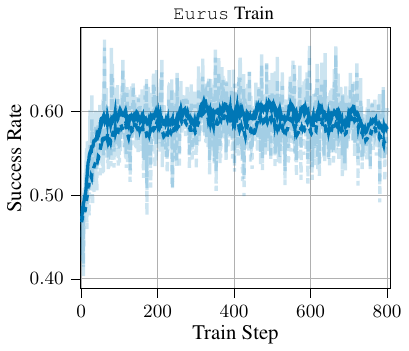}
    \end{minipage}
    \hfill
    \begin{minipage}[t]{0.49\textwidth}
        \centering
        \includegraphics{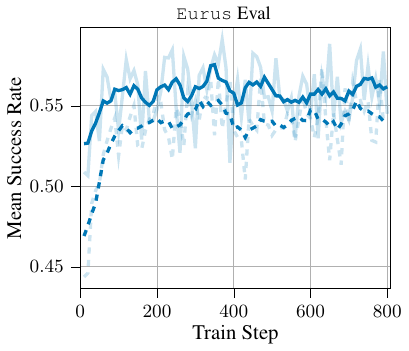}
    \end{minipage}
    \caption{
        \textit{\gls{troll}} and \textit{Clip} success rates for Qwen3-$8$B trained with GRPO on the \texttt{Eurus-\revision{Math}} training data (\textbf{left}) and evaluated on the \texttt{Eurus-\revision{Math}} test set (\textbf{right}).
        Smoothed values are shown in full opacity, with original curves in the background.  
        \textit{\gls{troll}} performs slightly but consistently better during training, and generalizes well to the test set.
    }
    \label{app_fig:eurus_qwen3_8b}
\end{figure}

\revision{\subsection{Additional Training Algorithms}
\label{app_ssec:additional_training_algos}
Finally, we evaluate Qwen3-$8$B using BAPO~\citep{xi2025bapo} as an adaptive clipping heuristic that acts as an alternative to regular clipping, and GPG~\citep{chu2025gpg}, which is a clipping-free policy gradient method.
\autoref{app_fig:qwen3_baselines} finds that BAPO's adaptive clipping does not significantly improve over regular clipping in our case, and still performs worse than \gls{troll}.

Although clipping-free GPG initially learns well, it eventually diverges during training, likely due to unstable policy updates.
The figure illustrates how \gls{troll} can benefit such non-clipping-based policy gradient methods. 
Unlike \gls{ppo}-style clipping, which relies on a ratio formulation, \gls{troll} limits updates by enforcing trust regions directly on the LLM output.
Thus, \gls{troll} can in principle be applied to any LLM post-training algorithm, including GPG. 
In our experiments, this approach effectively resolves GPG's stability issues, as demonstrated by the GPG(\textit{TROLL}) variant.}

\begin{figure}[t]
    \includegraphics[width=\linewidth]{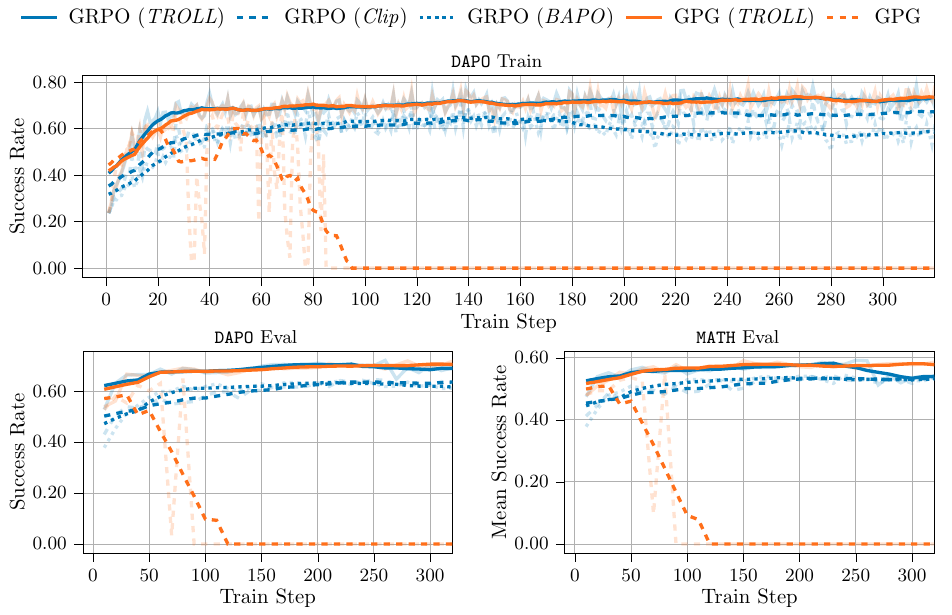}
    \caption{
        \revision{Success rates for Qwen3-$8$B on \texttt{DAPO} for training data (\textbf{top}), 
        in-domain evaluation (\textbf{bottom left}) and out-of-domain evaluation (\textbf{bottom right}).
        Smoothed values are shown in full opacity, with original curves in the background. 
        \textit{\gls{troll}} performs well when combined with either GRPO or the clipping-free GPG baseline, while vanilla GPG eventually diverges. 
        BAPO's adaptive clipping slightly improves over regular \textit{Clip} on the evaluation datasets, but both underperform \textit{\gls{troll}} by a fair margin.}
    }
    \label{app_fig:qwen3_baselines}
\end{figure}

\section{Further Analysis}

\subsection{Projection Parameter Sensitivity}
\label{app_ssec:ablations}

\begin{figure}[t]
    
    \includegraphics[width=\linewidth]{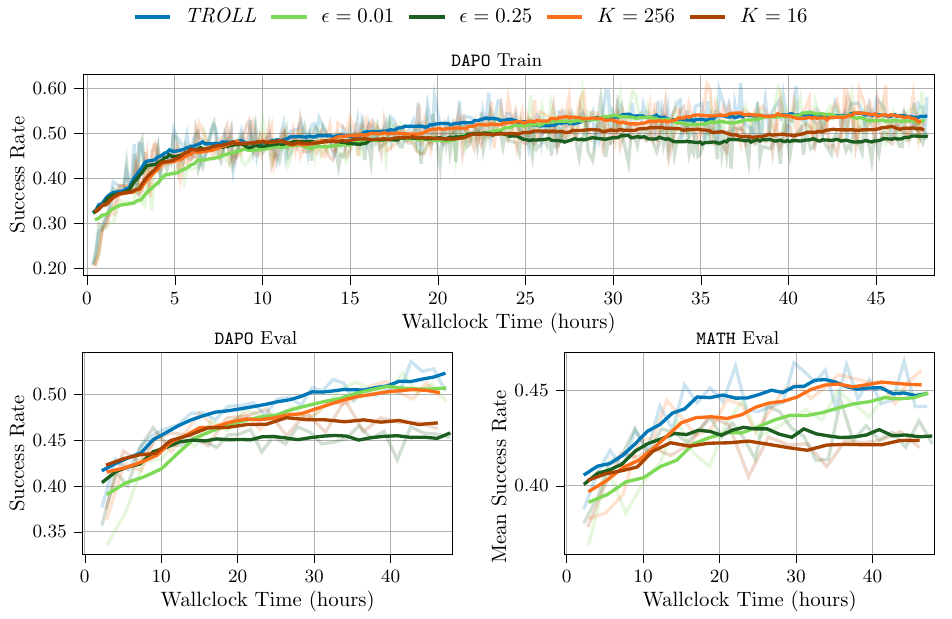}
    \caption{
        Qwen3-$1.7$B trained with GRPO using the \textit{\gls{troll}} projection compared to different hyperparameter choices
        on \texttt{DAPO} training data (\textbf{top}), in-domain evaluation (\textbf{bottom left}) and out-of-domain evaluation (\textbf{bottom right}).
        Smoothed values are shown in full opacity, with original curves in the background.  
        \textit{\gls{troll}} works well for reasonable \gls{kl} bounds $\epsilon$ and top-$K$ logit selections, but slightly degrades for too-large bounds and too few kept logits. 
    }
    \label{app_fig:ablations}
\end{figure}

We experiment with different \gls{kl} bounds, testing $\epsilon{=}0.01$ and $\epsilon{=}0.25$ instead of the default $\epsilon{=}0.05$.
Additionally, we try different levels of sparsification.
We switch the maximum number of kept tokens from $K{=}64$ to a lower $K{=}16$ and a higher $K{=}256$, adjusting the distribution mass threshold $\delta$ from $1e{-}5$ to $1e{-}4$ and $1e{-}6$ accordingly.
\autoref{app_fig:ablations} shows that a lower \gls{kl} bound $\epsilon$ for the projection leads to slower learning, but eventually reaches comparable performance. 
In contrast, a higher \gls{kl} bound leads to worse performance, presumably because the policy moves too quickly during update steps.
Reducing the number of kept tokens leads to worse overall performance, which is likely caused by incorrect \gls{kl} estimates and thus sub-optimal projections.
A higher amount of kept tokens does not yield any additional benefit, however, suggesting that $K{=}64$ and $\epsilon{=}1e$-$5$ maintain a sufficiently close approximation of the real policy logit distributions. 

\subsection[Batch Size]{\revision{Batch Size}}
\label{app_ssec:batch_size}
\revision{\textit{\gls{troll}}'s trust region projection promises more stable policy updates by constraining the difference between the new policy and the policy used to generate the data, in our case the responses. 
To evaluate this stability, we experiment with different batch sizes for Qwen3-$1.7$B trained with GRPO on \texttt{DAPO}, leaving all other parameters untouched.
Here, doubling the batch size doubles the number of gradient steps between data collection, thus increasing the difference between the old policy $\oldpol$ and the current policy $tarpol$ over the course of the training step.
\autoref{app_fig:batch_size} shows that \textit{\gls{troll}} remains stable when increasing the batch size from our default of $256$ to $2\,048$, indicating that \textit{\gls{troll}} can successfully train on older data due to its projection. 
In comparison, \textit{Clip} shows a gradual degradation in performance for each doubling of the batch size, likely because the clipping does not address a potential divergence of the current policy to the policy that was used to generate the data.
}

\begin{figure}[t]
    \includegraphics[width=\linewidth]{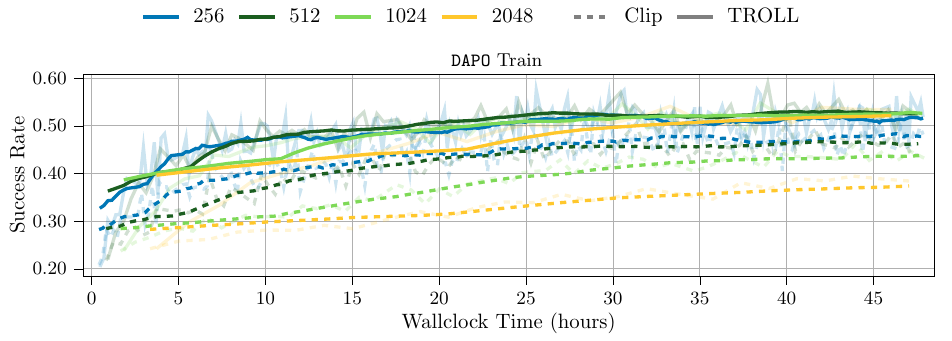}
    \caption{
        \revision{\textit{\gls{troll}} and \textit{Clip} training success rates for Qwen3-$1.7$B trained with GRPO on \texttt{DAPO} for different batch sizes.
        \textit{\gls{troll}} remains stable for larger batch sizes, while \textit{Clip} gradually and consistently degrades when increasing the batch size.}
    }
    \label{app_fig:batch_size}
\end{figure}

\subsection{Generated Sequence Analysis}
\label{app_ssec:analysis}
We analyze the behavior of different Qwen3 and Qwen2.5 model sizes trained on \texttt{DAPO} using \gls{grpo} in \autoref{app_fig:analysis}.
There is a general trend that larger models require fewer selected tokens to satisfy the sparsity mass threshold of $\delta{=}1e{-}5$, which is consistent with established \gls{llm} scaling laws~\citep{kaplan2020scaling}.
Here, as little as $5{-}10$ tokens are sufficient to capture most of the mass for the larger models. 
For the larger Qwen3 models, this trend appears less pronounced, likely because these models are to some extent saturating the \texttt{DAPO} benchmark. 
We also observe clear differences in response length dynamics over training. 
\gls{troll} generally adapts the token length much faster than \textit{Clip}.
\gls{troll} reduces the response length for Qwen3, while increasing it for Qwen2.5-Instruct.
This difference originates in the different behavior of the pretrained models used to initialize learning, as the Qwen3 models tend to generate much longer responses, presumably due to their built-in thinking mode.  
After the RL fine-tuning with \gls{troll}, the response lengths of both model families are more similar.
In contrast, models trained with \textit{Clip} show much slower shifts in response length. 
This quicker adjustment under \gls{troll} aligns with the faster performance gains observed in both model families.
Finally, both approaches clip or project slightly more than $0.1\%$ of tokens for most of the training, but \textit{\gls{troll}}'s projection exhibits a lot more variance and tends to increase in later training stages, potentially suggesting a more active involvement in the learning process. 
In some cases \textit{\gls{troll}} projects a lot more aggressively, although this increase in projections does not cause a degradation in model performance.

\begin{figure}[t]
    \centering
    
    \begin{tikzpicture}
\definecolor{darkcyan0119182}{RGB}{0,119,182}
\definecolor{darkgray176}{RGB}{176,176,176}
\definecolor{lightblue153226208}{RGB}{153,226,208}
\definecolor{lightgray204}{RGB}{204,204,204}
\definecolor{lightseagreen28169201}{RGB}{28,169,201}
\definecolor{mediumturquoise77210192}{RGB}{77,210,192}
\definecolor{midnightblue06392}{RGB}{0,63,92}
\begin{axis}[
        hide axis,
        xmin=0,
        xmax=1,
        ymin=0,
        ymax=1,
        legend columns=8,
        legend cell align=left,
        font=\small,
        legend style={
            draw=none,
            column sep=1ex,
            line width=1pt
        }
    ]
\addlegendimage{ultra thick, lightblue153226208}
\addlegendentry{600M}
\addlegendimage{ultra thick, mediumturquoise77210192}
\addlegendentry{1.7B}
\addlegendimage{ultra thick, lightseagreen28169201}
\addlegendentry{4B}
\addlegendimage{ultra thick, darkcyan0119182}
\addlegendentry{8B}
\addlegendimage{empty legend}
\addlegendentry{~}
\addlegendimage{ultra thick, gray, dashed}
\addlegendentry{\textit{Clip}}
\addlegendimage{ultra thick, gray}
\addlegendentry{\textit{TROLL}}
\end{axis}
\end{tikzpicture} \\[0.5em]
    
    \input{fig/result_plots/eurus_code_qwen3_model_size/grouped_vs_ent}
    \caption{
        \revision{
        \textit{\gls{troll}} and \textit{Clip} entropy (\textbf{left}) and success rate (\textbf{right}) for different Qwen3 models trained with GRPO on the \texttt{Eurus-Code} training data.
        Smoothed values are shown in full opacity, with original curves in the background.  
        \textit{\gls{troll}} generally causes less decrease in token entropy while \textit{Clip} shows a strong negative correlation between success rate and entropy.
        The quick improvement of Qwen3-8B \textit{Clip} around step 40 coincides with a rapid drop in entropy.
        }
    }
    \label{app_fig:eurus_code_ent}
\end{figure}

\begin{figure}[t]
   \centering
    \includegraphics[width=\textwidth]{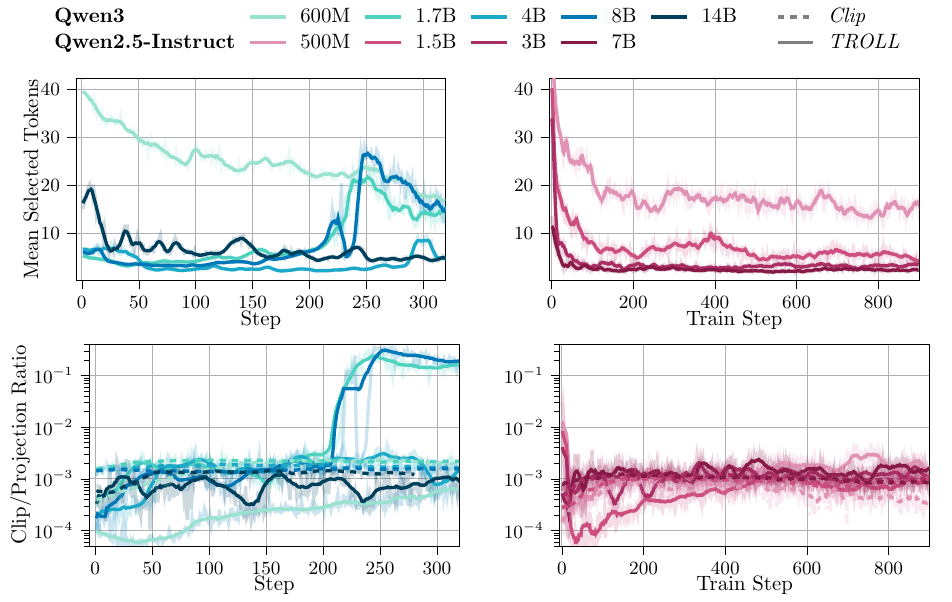}%
    \\
    \includegraphics[width=\textwidth]{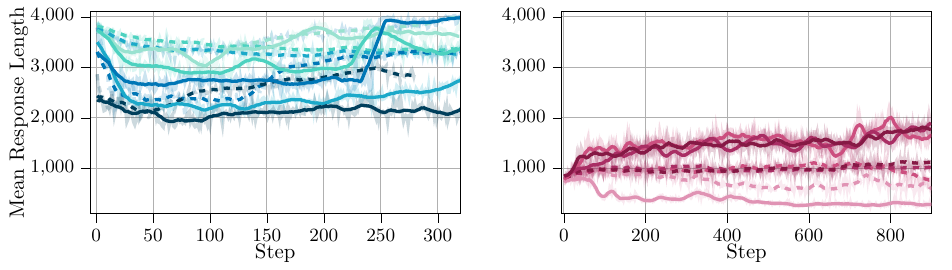}%

    \caption{
    Training dynamics of Qwen3 models on \texttt{DAPO} using GRPO.
    Smoothed values are shown in full opacity, with original curves in the background.  
    Larger models need fewer tokens to meet the sparsity threshold of $\delta{=}10^{-5}$. While both approaches affect ${\sim}0.1$\% of tokens most of the time, \textit{\gls{troll}} tends to increase projection later during training without harming performance.
    \textit{\gls{troll}} quickly adjusts response lengths while achieving higher success rates, whereas \textit{Clip} is slower to alter the response length over time. 
    }
    \label{app_fig:analysis}
\end{figure}

\subsection{TROLL Overhead}
\label{app_ssec:troll_overhead}
From a practical perspective, the number of selected tokens by the sparsification and the computational and memory overhead of \gls{troll} are relevant.
Fair comparison of the computation and memory overhead are tricky, as both the response length and number of kept tokens varies during training and models.
Initially, the average number of logits needed to achieve the desired total mass changes quickly, and small models tend to require significantly more logits due to their higher perplexity.

We therefore evaluate at the initial state by preventing updates with tight KL and clip ratio bounds.
Note that setting a low, i.e.\ zero, learning rate is not representative since the LLM outputs must change to activate the trust region projection.
\autoref{app_tab:troll_overhead} compares the memory and runtime overhead of using \gls{troll} and \gls{grpo} with small Qwen3 models on \texttt{GSM8K} trained on 4x Nvidia A100-40GB GPUs.
\emph{Chunked} refers to another simple memory reduction trick, where we normalize and sparsify only a chunk, in this case size 1024, at once and avoid the dense single precision upcast of the entire mini batch.
To compare the different model sizes with different response length characteristics fairly, we clip all answers to just $256$ tokens.
Note that this length is just short enough that most answers are clipped, while some prompts are still solved.
Then all models have almost the maximal $256$ response tokens \emph{on average} (about $255.4$), yet still have an update gradient to reach the trust region boundary.

The responses for a single prompt with a \gls{grpo} group size of 8 and this significantly simplified sequence length of $256$, float-32 representations for the probabilities, and Qwen3's tokenizer with a vocabulary of $151\,936$ tokens yields a memory overhead of
\begin{equation}
    \label{app_eq:naive_sparse_overhead}
    256\cdot8\cdot 151936 \cdot 4\mathrm{B} \approx 1.16 \mathrm{GiB}
\end{equation}
for the dense implementation.
Sparsification instead requires an average of $5{-}10$ logits per token (\autoref{app_fig:analysis} top), reducing the memory to less than $1\mathrm{MiB}$.
For each iteration, all methods need to store a rollout buffer of answers, in our case of size $256$.
In addition, the current mini-batch for the policy update needs to be stored.
While this overhead can be reduced to a single answer with gradient accumulation, the rollout buffer still needs to store all outputs of the old policy. 
The total memory overhead reported in \autoref{app_tab:troll_overhead} with 256 prompts is only ${\approx}6.3$ GiB across the four GPUs while just the dense distribution storage would already require ${\approx}296$ GiB of memory, showing the necessity of our sparsification.

\begin{table}[t]
\centering
\small{
    \setlength\tabcolsep{3pt}
    \begin{tabular}{lccc}
    \textbf{Metric (Method)} & \textbf{Qwen3-0.6B} & \textbf{Qwen3-1.7B} & \textbf{Qwen3-4B} \\
    \toprule
    VRAM (\textit{Clip}) & 25.415 GiB & 28.418 GiB & 34.574 GiB \\
    VRAM (\textit{\gls{troll}}) & 27.868 GiB & 30.663 GiB & 36.837 GiB \\
    VRAM (\textit{\gls{troll}} Chunked)  & 27.227 GiB & 29.994 GiB & 36.157 GiB \\
    VRAM Delta (Chunked) & +1.812 GiB (+7.1\%) & +1.576 GiB (+5.5\%) & +1.583 GiB (+4.6\%) \\
    \midrule
    Runtime (\textit{Clip}) & 30.874 s & 43.372 s & 85.133 s \\
    Runtime (\textit{\gls{troll}}) & 46.715 s & 49.053 s & 90.570 s \\
    Runtime (\textit{\gls{troll}} Chunked)  & 47.600 s & 50.629 s & 92.906 s \\
    Runtime Delta (Chunked) & +16.726 s (+54.2\%) & +7.257 s (+16.7\%) & +7.773 s (+9.1\%) \\
    \bottomrule
    \vspace{-0.2cm}
    \end{tabular}
    }
    \caption{Max allocated VRAM and runtime of one iteration. The smallest 0.6B models does not fully saturate the GPU, so the Delta results differ from the larger models. The projection overhead is independent of the model size and already below ten percent for the small 4B model and slower A100 GPU. The advantage of the chunked sparsification depends on the micro batch size, so the benefit is larger for bigger GPUs.}
    \label{app_tab:troll_overhead}
    \vspace{-0.3cm}
\end{table}

\clearpage
\section{Example Generations}
\label{app_sec:example_generations}

\pagebreak[0]
\begin{tcolorbox}[
    title=Prompt 1,
    lower separated=false,
    colback=white!95!black,
    colframe=white!90!black,
    fonttitle=\bfseries,
    coltitle=black,
    enhanced,
    boxed title style={
        colback=white!85!black,
        colframe=white!80!black,
        fonttitle=black
    },
    attach boxed title to top left={xshift=0.5cm,yshift=-2mm}
    ]
\lstinputlisting[language={}, frame=none]{appendix/val_gens/14b_dapo/in2.tex}
\end{tcolorbox}

\begin{tcbinputlisting}{
  enhanced,
  breakable,
  listing engine=listings,
  listing file=appendix/val_gens/14b_dapo/trpl_out2.tex,
  text only,
  title=\gls{troll} 14B Response,
    lower separated=false,
    colback=white!95!midnightblue06392,
    colframe=white!90!midnightblue06392,
    fonttitle=\bfseries,
    coltitle=black,
    boxed title style={
        colback=white!80!midnightblue06392,
        colframe=white!00!midnightblue06392,
        fonttitle=black
    },
    attach boxed title to top left={xshift=0.5cm,yshift=-2mm},
}
\end{tcbinputlisting}

\begin{tcbinputlisting}{
  enhanced,
  breakable,
  listing engine=listings,
  listing file=appendix/val_gens/14b_dapo/vanilla_out2.tex,
  text only,
  title=Clip 14B Response,
    lower separated=false,
    colback=white!95!red,
    colframe=white!90!red,
    fonttitle=\bfseries,
    coltitle=black,
    boxed title style={
        colback=white!80!red,
        colframe=white!00!red,
        fonttitle=black
    },
    attach boxed title to top left={xshift=0.5cm,yshift=-2mm},
}
\end{tcbinputlisting}

\clearpage

\end{document}